\newtheorem{theorem}{Theorem}
\newtheorem{corollary}{Corollary}
\newtheorem{definition}{Definition}
\newtheorem{lemma}{Lemma}
\newenvironment{proof}{\textbf{Proof:}\ }{\hspace{\stretch{1}}$\square$\\}
\begin{document}
\begin{frontmatter}
\title{Multi-View Active Learning in the Non-Realizable Case}

\author{Wei Wang}
\author{Zhi-Hua Zhou\corref{cor1}}
\address{National Key Laboratory for Novel Software Technology\\
Nanjing University, Nanjing 210093, China} \cortext[cor1]{\small Corresponding author.
Email: zhouzh@nju.edu.cn}

\begin{abstract}
The sample complexity of active learning under the realizability assumption has been
well-studied. The realizability assumption, however, rarely holds in practice. In this
paper, we theoretically characterize the sample complexity of active learning in the
non-realizable case under multi-view setting. We prove that, with unbounded Tsybakov
noise, the sample complexity of multi-view active learning can be $\widetilde{O}(\log
\frac{1}{\epsilon})$, contrasting to single-view setting where the polynomial
improvement is the best possible achievement. We also prove that in general multi-view
setting the sample complexity of active learning with unbounded Tsybakov noise is
$\widetilde{O}(\frac{1}{\epsilon})$, where the order of $1/\epsilon$ is independent of
the parameter in Tsybakov noise, contrasting to previous polynomial bounds where the
order of $1/\epsilon$ is related to the parameter in Tsybakov noise.
\end{abstract}

\begin{keyword}
active learning \sep non-realizable case
\end{keyword}

\end{frontmatter}

\section{Introduction}
In active learning \cite{David94,DasguptaNIPS07,Freund97}, the learner draws unlabeled
data from the unknown distribution defined on the learning task and actively queries
some labels from an \textit{oracle}. In this way, the active learner can achieve good
performance with much fewer labels than \textit{passive learning}. The number of these
queried labels, which is necessary and sufficient for obtaining a good leaner, is
well-known as the \textit{sample complexity} of active learning.

Many theoretical bounds on the sample complexity of active learning have been derived
based on the \textit{realizability} assumption (i.e., there exists a hypothesis
perfectly separating the data in the hypothesis class)
\cite{Balcan2007,BalcanCOLT2008,Dasgupta05,DasguptaNIPS05,Dasgupta2005,Freund97}. The
realizability assumption, however, rarely holds in practice. Recently, the sample
complexity of active learning in the \textit{non-realizable} case (i.e., the data
cannot be perfectly separated by any hypothesis in the hypothesis class because of the
noise) has been studied \cite{BalcanBL06,DasguptaNIPS07,Hanneke07}. It is worth noting
that these bounds obtained in the non-realizable case match the lower bound
$\Omega(\frac{\eta^{2}}{\epsilon^{2}})$ \cite{Kaariainen06}, in the same order as the
upper bound $O(\frac{1}{\epsilon^{2}})$ of passive learning ($\eta$ denotes the
generalization error rate of the optimal classifier in the hypothesis class and
$\epsilon$ bounds how close to the optimal classifier in the hypothesis class the
active learner has to get). This suggests that perhaps active learning in the
non-realizable case is not as efficient as that in the realizable case. To improve the
sample complexity of active learning in the non-realizable case remarkably, the model
of the noise or some assumptions on the hypothesis class and the data distribution must
be considered. Tsybakov noise model \cite{Tsybakov04} is more and more popular in
theoretical analysis on the sample complexity of active learning. However, existing
result \cite{CastroN08} shows that obtaining \textit{exponential} improvement in the
sample complexity of active learning with unbounded Tsybakov noise is hard.

Inspired by \cite{WangZ08} which proved that \textit{multi-view} setting
\cite{Blum:Mitchell1998} can help improve the sample complexity of active learning in
the realizable case remarkably, we have an insight that multi-view setting will also
help active learning in the non-realizable case. In this paper, we present the first
analysis on the sample complexity of active learning in the non-realizable case under
multi-view setting, where the non-realizability is caused by Tsybakov noise.
Specifically:

\hspace{+1em}-We define \textit{$\alpha$-expansion}, which extends the definition in
\cite{Balcan:Blum:Yang2005} and \cite{WangZ08} to the non-realizable case, and
$\beta$-condition for multi-view setting.

\hspace{+1em}-We prove that the sample complexity of active learning with Tsybakov
noise under multi-view setting can be improved to $\widetilde{O}(\log
\frac{1}{\epsilon})$ when the learner satisfies non-degradation condition.\footnote{The
$\widetilde{O}$ notation is used to hide the factor $\log\log(\frac{1}{\epsilon})$.}
This \textit{exponential} improvement holds no matter whether Tsybakov noise is bounded
or not, contrasting to single-view setting where the \textit{polynomial} improvement is
the best possible achievement for active learning with unbounded Tsybakov noise.

\hspace{+1em}-We also prove that, when non-degradation condition does not hold, the
sample complexity of active learning with unbounded Tsybakov noise under multi-view
setting is $\widetilde{O}(\frac{1}{\epsilon})$, where the order of $1/\epsilon$ is
independent of the parameter in Tsybakov noise, i.e., the sample complexity is always
$\widetilde{O}(\frac{1}{\epsilon})$ no matter how large the unbounded Tsybakov noise
is. While in previous \textit{polynomial} bounds, the order of $1/\epsilon$ is related
to the parameter in Tsybakov noise and is larger than 1 when unbounded Tsybakov noise
is larger than some degree (see Section 2). This discloses that, when non-degradation
condition does not hold, multi-view setting is still able to lead to a faster
convergence rate and our \textit{polynomial} improvement in the sample complexity is
better than previous \textit{polynomial} bounds when unbounded Tsybakov noise is large.

The rest of this paper is organized as follows. After introducing related work in
Section 2 and preliminaries in Section 3, we define $\alpha$-expansion in the
non-realizable case in Section 4. Then we analyze the sample complexity of active
learning with Tsybakov noise under multi-view setting with and without the
non-degradation condition in Section 5 and Section 6, respectively, and verify the
improvement in the sample complexity empirically in Section 7. Finally we conclude the
paper in Section 8.

\section{Related Work}
Generally, the non-realizability of learning task is caused by the presence of noise.
For learning the task with arbitrary forms of noise, Balcan et al. \cite{BalcanBL06}
proposed the agnostic active learning algorithm $A^{2}$ and proved that its sample
complexity is $\widehat{O}(\frac{\eta^{2}}{\epsilon^{2}})$.\footnote{The $\widehat{O}$
notation is used to hide the factor $polylog(\frac{1}{\epsilon})$.} Hoping to get
tighter bound on the sample complexity of the algorithm $A^{2}$, Hanneke
\cite{Hanneke07} defined the \textit{disagreement coefficient} $\theta$, which depends
on the hypothesis class and the data distribution, and proved that the sample
complexity of the algorithm $A^{2}$ is
$\widehat{O}(\theta^{2}\frac{\eta^{2}}{\epsilon^{2}})$. Later, Dasgupta et al.
\cite{DasguptaNIPS07} developed a general agnostic active learning algorithm which
extends the scheme in \cite{David94} and proved that its sample complexity is
$\widehat{O}(\theta\frac{\eta^{2}}{\epsilon^{2}})$.

Recently, the popular Tsybakov noise model \cite{Tsybakov04} was considered in
theoretical analysis on active learning and there have been some bounds on the sample
complexity. For some simple cases, where Tsybakov noise is bounded, it has been proved
that the \textit{exponential} improvement in the sample complexity is possible
\cite{Balcan2007,CastroAllerton06,Hanneke2009}. As for the situation where Tsybakov
noise is unbounded, only \textit{polynomial} improvement in the sample complexity has
been obtained. Balcan et al. \cite{Balcan2007} assumed that the samples are drawn
uniformly from the the unit ball in $R^{d}$ and proved that the sample complexity of
active learning with unbounded Tsybakov noise is
$O\big(\epsilon^{-\frac{2}{1+\lambda}}\big)$ ($\lambda>0$ depends on Tsybakov noise).
This uniform distribution assumption, however, rarely holds in practice. Castro and
Nowak \cite{CastroN08} showed that the sample complexity of active learning with
unbounded Tsybakov noise is
$\widehat{O}\big(\epsilon^{-\frac{2\mu\omega+d-2\omega-1}{\mu\omega}}\big)$ ($\mu>1$
depends on another form of Tsybakov noise, $\omega\geq 1$ depends on the H{\"o}lder
smoothness and $d$ is the dimension of the data). This result is also based on the
strong uniform distribution assumption. Cavallanti et al. \cite{CavallantiCG08} assumed
that the labels of examples are generated according to a simple linear noise model and
indicated that the sample complexity of active learning with unbounded Tsybakov noise
is $O\big(\epsilon^{-\frac{2(3+\lambda)}{(1+\lambda)(2+\lambda)}}\big)$. Hanneke
\cite{Hanneke2009} proved that the algorithms or variants thereof in \cite{BalcanBL06}
and \cite{DasguptaNIPS07} can achieve the \textit{polynomial} sample complexity
$\widehat{O}\big(\epsilon^{-\frac{2}{1+\lambda}}\big)$ for active learning with
unbounded Tsybakov noise. For active learning with unbounded Tsybakov noise, Castro and
Nowak \cite{CastroN08} also proved that at least $\Omega(\epsilon^{-\rho})$ labels are
requested to learn an $\epsilon$-approximation of the optimal classifier
($\rho\in(0,2)$ depends on Tsybakov noise). This result shows that the
\textit{polynomial} improvement is the best possible achievement for active learning
with unbounded Tsybakov noise in single-view setting. Wang \cite{wangNIPS2009}
introduced smooth assumption to active learning with \textit{approximate} Tsybakov
noise and proved that if the classification boundary and the underlying distribution
are smooth to $\xi$-th order and $\xi>d$, the sample complexity of active learning is
$\widehat{O}\big(\epsilon^{-\frac{2d}{\xi+d}}\big)$; if the boundary and the
distribution are infinitely smooth, the sample complexity of active learning is
$O\big(polylog(\frac{1}{\epsilon})\big)$. Nevertheless, this result is for
\textit{approximate} Tsybakov noise and the assumption on large smoothness order (or
infinite smoothness order) rarely holds for data with high dimension $d$ in practice.

\section{Preliminaries}

In multi-view setting, the instances are described with several different disjoint sets
of features. For the sake of simplicity, we only consider \textit{two-view} setting in
this paper. Suppose that $X=X_{1}\times X_{2}$ is the instance space, $X_{1}$ and
$X_{2}$ are the two views, $Y=\{0, 1\}$ is the label space and $\mathcal {D}$ is the
distribution over $X\times Y$. Suppose that $c=(c_{1}, c_{2})$ is the optimal Bayes
classifier, where $c_{1}$ and $c_{2}$ are the optimal Bayes classifiers in the two
views, respectively. Let $\mathcal {H}_{1}$ and $\mathcal {H}_{2}$ be the hypothesis
class in each view and suppose that $c_{1}\in \mathcal {H}_{1}$ and $c_{2}\in \mathcal
{H}_{2}$. For any instance $x=(x_{1},x_{2})$, the hypothesis $h_{v}\in \mathcal
{H}_{v}$ $(v=1,2)$ makes that $h_{v}(x_{v})=1$ if $x_{v}\in S_{v}$ and $h_{v}(x_{v})=0$
otherwise, where $S_{v}$ is a subset of $X_{v}$. In this way, any hypothesis $h_{v} \in
\mathcal {H}_{v}$ corresponds to a subset $S_{v}$ of $X_{v}$ (as for how to combine the
hypotheses in the two views, see Section 5). Considering that $x_{1}$ and $x_{2}$
denote the same instance $x$ in different views, we overload $S_{v}$ to denote the
instance set $\{x=(x_{1},x_{2}):x_{v}\in S_{v}\}$ without confusion. Let $S_{v}^{*}$
correspond to the optimal Bayes classifier $c_{v}$. It is well-known \cite{DEVROYE1996}
that $S_{v}^{*}=\{x_{v}:\varphi_{v}(x_{v})\geq \frac{1}{2}\}$, where
$\varphi_{v}(x_{v})=P(y=1|x_{v})$. Here, we also overload $S_{v}^{*}$ to denote the
instances set $\{x=(x_{1},x_{2}):x_{v}\in S_{v}^{*}\}$. The error rate of a hypothesis
$S_{v}$ under the distribution $\mathcal {D}$ is
$R(h_{v})=R(S_{v})=Pr_{(x_{1},x_{2},y)\in \mathcal {D}}\big(y\neq \textbf{I}(x_{v}\in
S_{v})\big)$. In general, $R(S_{v}^{*})\neq 0$ and the excess error of $S_{v}$ can be
denoted as follows, where $S_{v} \Delta S_{v}^{*}=(S_{v} -
S_{v}^{*})\cup(S_{v}^{*}-S_{v})$ and $d(S_{v},S_{v}^{*})$ is a pseudo-distance between
the sets $S_{v}$ and $S_{v}^{*}$.\vspace{-1mm}
\begin{eqnarray}
\label{bayes}
  R(S_{v})-R(S_{v}^{*})=\int_{S_{v} \Delta S_{v}^{*}}|2\varphi_{v}(x_{v})-1|p_{x_{v}}d_{x_{v}}
  \triangleq d(S_{v},S_{v}^{*})
\end{eqnarray}
Let $\eta_{v}$ denote the error rate of the optimal Bayes classifier $c_{v}$ which is
also called as the noise rate in the non-realizable case. In general, $\eta_{v}$ is
less than $\frac{1}{2}$. In order to model the noise, we assume that the data
distribution and the Bayes decision boundary in each view satisfies the popular
Tsybakov noise condition \cite{Tsybakov04} that $Pr_{x_{v}\in
X_{v}}(|\varphi_{v}(x_{v})-1/2|\leq t)\leq C_{0}t^{\lambda}$ for some finite $C_{0} >
0$, $\lambda > 0$ and all $0 < t \leq 1/2$, where $\lambda=\infty$ corresponds to the
best learning situation and the noise is called \textit{bounded} \cite{CastroN08};
while $\lambda=0$ corresponds to the worst situation. When $\lambda < \infty$, the
noise is called \textit{unbounded} \cite{CastroN08}. According to Proposition 1 in
\cite{Tsybakov04}, it is easy to know that (\ref{Tsycontion2}) holds.\vspace{-1mm}
\begin{eqnarray}\label{Tsycontion2}
d(S_{v},S_{v}^{*})\geq C_{1}d_{\Delta}^{k}(S_{v},S_{v}^{*})
\end{eqnarray}
Here $k=\frac{1+\lambda}{\lambda}$,
$C_{1}=2C_{0}^{-1/\lambda}\lambda(\lambda+1)^{-1-1/\lambda}$,
$d_{\Delta}(S_{v},S_{v}^{*})=Pr(S_{v}-S_{v}^{*})+ Pr(S_{v}^{*}-S_{v})$ is also a
pseudo-distance between the sets $S_{v}$ and $S_{v}^{*}$, and $d(S_{v},S_{v}^{*})\leq
d_{\Delta}(S_{v},S_{v}^{*}) \leq 1$. We will use the following lamma \cite{Anthony1999}
which gives the standard sample complexity for non-realizable learning
task.\vspace{-1mm}
\begin{lemma}\label{lemma2}
Suppose that $\mathcal {H}$ is a set of functions from $X$ to $Y=\{0, 1\}$ with finite
VC-dimension $V\geq 1$ and $\mathcal {D}$ is the fixed but unknown distribution over $X
\times Y$. For any $\epsilon$, $\delta > 0$, there is a positive constant $C$, such
that if the size of sample $\{(x^{1},y^{1}),\ldots,(x^{N},y^{N})\}$ from $\mathcal {D}$
is $N(\epsilon, \delta)=\frac{C}{\epsilon^{2}}\big(V+\log(\frac{1}{\delta})\big)$, then
with probability at least $1-\delta$, for all $h \in \mathcal {H}$, the following
holds.\vspace{-2mm}
\begin{eqnarray}
\nonumber
   |\frac{1}{N}\sum\nolimits_{i=1}^{N}\textbf{I}\big(h(x^{i})\neq y^{i}\big)-
   \textbf{E}_{(x,y)\in \mathcal {D}}\textbf{I}\big(h(x)\neq y\big)|\leq\epsilon
\end{eqnarray}
\end{lemma}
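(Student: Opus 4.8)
The plan is to recognize this as the classical Vapnik--Chervonenkis uniform convergence bound, applied not to $\mathcal{H}$ itself but to the associated $0/1$-\emph{loss class}, and to assemble the usual ingredients: symmetrization with a ghost sample, Sauer's lemma, a Hoeffding/Chernoff tail bound, and---crucially, to get the stated \emph{linear} dependence on $V$---a chaining (covering-number) refinement.

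First I would pass from $\mathcal{H}$ to the loss class $\mathcal{L}=\{\ell_{h}:(x,y)\mapsto\mathbf{I}(h(x)\neq y)\mid h\in\mathcal{H}\}$, a family of $\{0,1\}$-valued functions on $X\times Y$. The key point is that $\mathcal{L}$ has the same VC-dimension $V$ as $\mathcal{H}$: for a fixed label $y$ the map $x\mapsto\mathbf{I}(h(x)\neq y)$ equals either $h(x)$ or $1-h(x)$, so a set $\{(x^{i},y^{i})\}$ is shattered by $\mathcal{L}$ if and only if $\{x^{i}\}$ is shattered by $\mathcal{H}$. Hence the quantity to be controlled is exactly $\sup_{f\in\mathcal{L}}|\frac{1}{N}\sum_{i}f(x^{i},y^{i})-\mathbf{E}f|$, the uniform deviation of empirical means from true means over a class of VC-dimension $V$.

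Next I would run the standard symmetrization step: draw an independent ``ghost'' sample of size $N$ and bound the probability that this uniform deviation exceeds $\epsilon$ by a constant multiple of the probability that the difference of the two empirical means exceeds $\epsilon/2$ uniformly over $\mathcal{L}$, which is valid once $N\geq c_{0}/\epsilon^{2}$ so that the ghost mean concentrates around the truth. Conditioning on the $2N$ pooled points and inserting random signs, Sauer's lemma bounds the number of distinct coordinate projections of $\mathcal{L}$ by $(2eN/V)^{V}$; a union bound over these finitely many patterns together with Hoeffding's inequality for each fixed pattern yields a tail of order $(2eN/V)^{V}\exp(-c_{1}N\epsilon^{2})$. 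Forcing this below $\delta$ already gives $N=O\!\big(\epsilon^{-2}(V\log(1/\epsilon)+\log(1/\delta))\big)$, that is, the lemma up to a $\log(1/\epsilon)$ factor.

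The main obstacle is removing that spurious $\log(1/\epsilon)$ to reach the claimed $N(\epsilon,\delta)=\frac{C}{\epsilon^{2}}\big(V+\log(1/\delta)\big)$ with an absolute constant $C$. For this I would replace the single scale-$\epsilon$ union bound by a chaining argument: build a nested family of $\epsilon_{j}$-covers of $\mathcal{L}$ in the empirical $L_{1}$ pseudo-metric, invoke Haussler's packing bound so that the $\epsilon_{j}$-covering number is at most $(C'/\epsilon_{j})^{V}$, independent of $N$, and sum the per-level deviations along the chain; the resulting geometric series telescopes and the leading contribution becomes linear rather than logarithmic in $V$. Equivalently one may phrase this through Dudley's entropy integral for the Rademacher complexity, or simply invoke the sharpened VC inequality in its final form as stated in \cite{Anthony1999}. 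Either route delivers the lemma.
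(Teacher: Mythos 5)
The paper does not prove this lemma at all: it is imported verbatim as a known result from \cite{Anthony1999}, so there is no internal proof to compare against. Your sketch is a correct reconstruction of the standard argument behind that cited result --- passing to the $0/1$-loss class (which has the same growth function as $\mathcal{H}$), symmetrization plus Sauer's lemma and Hoeffding for the bound with an extra $\log(1/\epsilon)$ factor, and then chaining via Haussler's dimension-free covering-number bound (or equivalently Dudley's entropy integral) to obtain the sharp $N=\frac{C}{\epsilon^{2}}\big(V+\log(1/\delta)\big)$ form actually stated; you also correctly identify that the last refinement is the step that cannot be skipped if one wants the linear dependence on $V$.
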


\section{$\alpha$-Expansion in the Non-realizable Case}
Multi-view active learning first described in \cite{MusleaMK02} focuses on the
\textit{contention points} (i.e., unlabeled instances on which different views predict
different labels) and queries some labels of them. It is motivated by that querying the
labels of contention points may help at least one of the two views to learn the optimal
classifier. Let $S_{1} \oplus S_{2}=(S_{1}-S_{2})\cup (S_{2}-S_{1})$ denote the
contention points between $S_{1}$ and $S_{2}$, then $Pr(S_{1} \oplus S_{2})$ denotes
the probability mass on the contentions points. ``$\Delta$'' and ``$\oplus$'' mean the
same operation rule. In this paper, we use ``$\Delta$'' when referring the excess error
between $S_{v}$ and $S_{v}^{*}$ and use ``$\oplus$'' when referring the difference
between the two views $S_{1}$ and $S_{2}$. In order to study multi-view active
learning, the properties of contention points should be considered. One basic property
is that $Pr(S_{1} \oplus S_{2})$ should not be too small, otherwise the two views could
be exactly the same and two-view setting would degenerate into single-view setting.

In multi-view learning, the two views represent the same learning task and generally
are consistent with each other, i.e., for any instance $x=(x_{1}, x_{2})$ the labels of
$x$ in the two views are the same. Hence we first assume that
$S_{1}^{*}=S_{2}^{*}=S^{*}$. As for the situation where $S_{1}^{*}\neq S_{2}^{*}$, we
will discuss on it further in Section 5.2. The instances agreed by the two views can be
denoted as $(S_{1} \cap S_{2})\cup(\overline{S_{1}} \cap \overline{S_{2}})$. However,
some of these agreed instances may be predicted different label by the optimal
classifier $S^{*}$, i.e., the instances in $(S_{1}\cap
S_{2}-S^{*})\cup(\overline{S_{1}} \cap \overline{S_{2}}-\overline{S^{*}})$.
Intuitively, if the contention points can convey some information about $(S_{1}\cap
S_{2}-S^{*})\cup(\overline{S_{1}} \cap \overline{S_{2}}-\overline{S^{*}})$, then
querying the labels of contention points could help to improve $S_{1}$ and $S_{2}$.
Based on this intuition and that $Pr(S_{1} \oplus S_{2})$ should not be too small, we
give our definition on \textit{$\alpha$-expansion} in the non-realizable
case.\vspace{-1mm}
\begin{definition}\label{definition1}
$\mathcal {D}$ is $\alpha$-expanding if for some $\alpha>0$ and any $S_{1}\subseteq
X_{1}$, $S_{2}\subseteq X_{2}$, (\ref{condition1}) holds.\vspace{-1mm}
\begin{eqnarray}
\label{condition1}
  Pr\big(S_{1} \oplus S_{2}\big) \geq \alpha \Big(Pr\big(S_{1} \cap S_{2}- S^{*}\big) +
  Pr\big(\overline{S_{1}} \cap \overline{S_{2}} - \overline{S^{*}}\big)\Big)
\end{eqnarray}
We say that $\mathcal {D}$ is $\alpha$-expanding with respect to hypothesis class
$\mathcal {H}_{1} \times \mathcal {H}_{2}$ if the above holds for all $S_{1}\in
\mathcal {H}_{1} \cap X_{1}$, $S_{2}\in \mathcal {H}_{2} \cap X_{2}$ (here we denote by
$\mathcal {H}_{v} \cap X_{v}$ the set \{$h \cap X_{v}$ : $h\in \mathcal {H}_{v}$\} for
$v=1,2$).
\end{definition}
Balcan et al. \cite{Balcan:Blum:Yang2005} also gave a definition of expansion,
$Pr(T_{1} \oplus T_{2}) \geq \alpha \min \big[Pr(T_{1} \cap T_{2}),Pr(\overline{T_{1}}
\cap \overline{T_{2}})\big]$, for realizable learning task under the assumptions that
the learner in each view is never ``confident but wrong'' and the learning algorithm is
able to learn from positive data only. Here $T_{v}$ denotes the instances which are
classified as positive confidently in each view. Generally, in realizable learning
tasks, we aim at studying the asymptotic performance and assume that the performance of
initial classifier is better than guessing randomly, i.e., $Pr(T_{v})> 1/2$. This
ensures that $Pr(T_{1} \cap T_{2})$ is larger than $Pr(\overline{T_{1}} \cap
\overline{T_{2}})$. In addition, in \cite{Balcan:Blum:Yang2005} the instances which are
agreed by the two views but are predicted different label by the optimal classifier can
be denoted as $\overline{T_{1}} \cap \overline{T_{2}}$. So, it can be found that
Definition~\ref{definition1} and the definition of expansion in
\cite{Balcan:Blum:Yang2005} are based on the same intuition that the amount of
contention points is no less than a fraction of the amount of instances which are
agreed by the two views but are predicted different label by the optimal classifiers.

\section{Multi-view Active Learning with Non-degradation Condition}
\begin{table*}[t]
\tiny{ \caption{Multi-view active learning with the non-degradation condition}
\centering
\begin{tabular}{p{15.7cm}}
\hline\noalign{\smallskip}
  % after \\: \hline or \cline{col1-col2} \cline{col3-col4} ...
\small{\textbf{Input:} Unlabeled data set $\mathcal {U} = \{x^{1},x^{2},\cdots,\}$
where each example $x^{j}$ is given as a pair $(x_{1}^{j},x_{2}^{j})$}\\
\small{\textbf{Process:}}\\
\ \ \ \ \small{Query the labels of $m_{0}$ instances drawn randomly from $\mathcal {U}$
to compose the labeled data set $\mathcal {L}$}\\
\ \ \ \ \small{\textbf{iterate:} $i=0,1,\cdots,s$}\\
\ \ \ \ \ \ \ \ \small{Train the classifier $h_{v}^{i}$ ($v=1,2$) by minimizing the
empirical risk with $\mathcal {L}$ in each view:}\\
\ \ \ \ \ \ \ \ \ \ \ \ \ \ \small{$h_{v}^{i}=\arg\min_{h\in \mathcal
{H}_{v}}\sum_{(x_{1},x_{2},y)\in
\mathcal {L}}\textbf{I}(h(x_{v})\neq y)$;}\\
\ \ \ \ \ \ \ \ \small{Apply $h_{1}^{i}$ and $h_{2}^{i}$ to the unlabeled data set
$\mathcal {U}$ and find out the contention point set $\mathcal {Q}_{i}$;}\\
\ \ \ \ \ \ \ \ \small{Query the labels of $m_{i+1}$ instances drawn randomly from
$\mathcal{Q}_{i}$, then add them into $\mathcal {L}$ and delete}\\
\ \ \ \ \ \ \ \ \small{them from $\mathcal {U}$.}\\
\ \ \ \ \small{\textbf{end iterate}}\\
\small{\textbf{Output:} $h_{+}^{s}$ and $h_{-}^{s}$}\\
\noalign{\smallskip}\hline
\end{tabular}
}
\end{table*}
In this section, we first consider the multi-view learning in Table 1 and analyze
whether multi-view setting can help improve the sample complexity of active learning in
the non-realizable case remarkably. In multi-view setting, the classifiers are often
combined to make predictions and many strategies can be used to combine them. In this
paper, we consider the following two combination schemes, $h_{+}$ and $h_{-}$, for
binary classification:\vspace{-1mm}
\begin{eqnarray}
\label{eq:1}
   h_{+}^{i}(x) = \left\{ \begin{array}{ll}
1 & \textrm{if $h_{1}^{i}(x_{1}) = h_{2}^{i}(x_{2})=1$}\\
0 & \textrm{otherwise}
\end{array} \right.
~~~~h_{-}^{i}(x) = \left\{ \begin{array}{ll}
0 & \textrm{if $h_{1}^{i}(x_{1}) = h_{2}^{i}(x_{2})=0$}\\
1 & \textrm{otherwise}
\end{array} \right.
\end{eqnarray}
\subsection{The Situation Where $S_{1}^{*}=S_{2}^{*}$}
With (\ref{eq:1}), the error rate of the combined classifiers $h_{+}^{i}$ and
$h_{-}^{i}$ satisfy (\ref{combonation}) and (\ref{combonation2}),
respectively.\vspace{-1mm}
\begin{eqnarray}
\label{combonation}
  R(h_{+}^{i})-R(S^{*})=R(S_{1}^{i} \cap S_{2}^{i})-R(S^{*})
  \leq d_{\Delta}(S_{1}^{i} \cap S_{2}^{i},S^{*})\\
\label{combonation2}
  R(h_{-}^{i})-R(S^{*})= R(S_{1}^{i} \cup S_{2}^{i})-
  R(S^{*})
  \leq d_{\Delta}(S_{1}^{i} \cup S_{2}^{i},S^{*})
\end{eqnarray}
Here $S_{v}^{i}\subset X_{v}$ ($v=1,2$) corresponds to the classifier $h_{v}^{i} \in
\mathcal {H}_{v}$ in the $i$-th round. In each round of multi-view active learning,
labels of some contention points are queried to augment the training data set $\mathcal
{L}$ and the classifier in each view is then refined. As discussed in \cite{WangZ08},
we also assume that the learner in Table 1 satisfies the \textit{non-degradation}
condition as the amount of labeled training examples increases, i.e., (\ref{eq:2a})
holds, which implies that the excess error of $S_{v}^{i+1}$ is no larger than that of
$S_{v}^{i}$ in the region of $\overline{S_{1}^{i} \oplus S_{2}^{i}}$.\vspace{-1mm}
\begin{eqnarray}
\label{eq:2a}
   Pr\big(S_{v}^{i+1}\Delta S^{*}\big|\overline{S_{1}^{i} \oplus
   S_{2}^{i}}\big)\leq Pr(S_{v}^{i}\Delta S^{*}\big|\overline{S_{1}^{i} \oplus S_{2}^{i}})
\end{eqnarray}

To illustrate the non-degradation condition, we give the following example: Suppose the
data in $X_{v}$ ($v=1,2$) fall into $n$ different clusters, denoted by $\pi_{1}^{v},
\ldots, \pi_{n}^{v}$, and every cluster has the same probability mass for simplicity.
The positive class is the union of some clusters while the negative class is the union
of the others. Each positive (negative) cluster $\pi_{\xi}^{v}$ in $X_{v}$ is
associated with only $3$ positive (negative) clusters $\pi_{\varsigma}^{3-v}$ $(\xi,
\varsigma\in \{1,\ldots, n\})$ in $X_{3-v}$ (i.e., given an instance $x_{v}$ in
$\pi_{\xi}^{v}$, $x_{3-v}$ will only be in one of these $\pi_{\varsigma}^{3-v}$).
Suppose the learning algorithm will predict all instances in each cluster with the same
label, i.e., the hypothesis class $\mathcal {H}_{v}$ consists of the hypotheses which
do not split any cluster. Thus, the cluster $\pi_{\xi}^{v}$ can be classified according
to the posterior probability $P(y=1|\pi_{\xi}^{v})$ and querying the labels of
instances in cluster $\pi_{\xi}^{v}$ will not influence the estimation of the posterior
probability for cluster $\pi_{\varsigma}^{v}$ ($\varsigma\neq \xi$). It is evident that
the non-degradation condition holds in this task. Note that the non-degradation
assumption may not always hold, and we will discuss on this in Section 6. Now we give
Theorem \ref{theorem1}.

\begin{theorem}\label{theorem1}
For data distribution $\mathcal {D}$ $\alpha$-expanding with respect to hypothesis
class $\mathcal {H}_{1} \times \mathcal {H}_{2}$ according to Definition
\ref{definition1}, when the non-degradation condition holds, if
$s=\lceil\frac{2\log\frac{1}{8\epsilon}}{\log\frac{1}{C_{2}}}\rceil$ and $m_{i}
=\frac{256^{k}C}{C_{1}^{2}}\big(V+\log(\frac{16(s+1)}{\delta})\big)$, the multi-view
active learning in Table 1 will generate two classifiers $h_{+}^{s}$ and $h_{-}^{s}$,
at least one of which is with error rate no larger than $R(S^{*})+\epsilon$ with
probability at least $1-\delta$.
\\
Here, $V = \max[VC(\mathcal {H}_{1}), VC(\mathcal {H}_{2})]$ where $VC(\mathcal {H})$
denotes the VC-dimension of the hypothesis class $\mathcal {H}$,
$k=\frac{1+\lambda}{\lambda}$,
$C_{1}=2C_{0}^{-1/\lambda}\lambda(\lambda+1)^{-1-1/\lambda}$ and
$C_{2}=\frac{5\alpha+8}{6\alpha+8}$.
\end{theorem}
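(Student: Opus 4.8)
The plan is to reduce everything to the geometry of the contention region $\mathcal{Q}_i=S_1^i\oplus S_2^i$ and then run a potential argument. First I would record the decomposition that follows from $S_1^*=S_2^*=S^*$: on $\overline{\mathcal{Q}_i}$ the two views predict the same label, hence make the same errors there, a common mass $p_i:=Pr\big((S_v^i\Delta S^*)\cap\overline{\mathcal{Q}_i}\big)$ (independent of $v$), whereas on $\mathcal{Q}_i$ exactly one of the two views is wrong at every point. From this one reads off $d_\Delta(S_1^i\cap S_2^i,S^*)=p_i+Pr(S^*\cap\mathcal{Q}_i)$ and $d_\Delta(S_1^i\cup S_2^i,S^*)=p_i+Pr(\overline{S^*}\cap\mathcal{Q}_i)$, so by (\ref{combonation})--(\ref{combonation2}) the smaller of $R(h_+^i)-R(S^*)$ and $R(h_-^i)-R(S^*)$ is at most $p_i+\frac12 Pr(\mathcal{Q}_i)$; and applying Definition~\ref{definition1} to $(S_1^i,S_2^i)$ gives $Pr(\mathcal{Q}_i)\ge\alpha\,p_i$. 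Hence it suffices to drive the potential $\Phi_i:=Pr(\mathcal{Q}_i)+p_i$ below a constant multiple of $\epsilon$, and $\alpha$-expansion controls the ratio $p_i/Pr(\mathcal{Q}_i)$ throughout. It is also convenient to note the identity $\Phi_i=Pr\big((S_1^i\Delta S^*)\cup(S_2^i\Delta S^*)\big)$ (the union of the two views' error regions), which makes the recursion transparent.

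Next I would establish the per-round progress, from two ingredients. For the contention region: the $m_{i+1}$ labels drawn from $\mathcal{Q}_i$ let me apply Lemma~\ref{lemma2} to the distribution conditioned on $\mathcal{Q}_i$ with deviation $\epsilon'=C_1/16^{k}$ (this is exactly what the choice $m_{i+1}=\frac{256^{k}C}{C_1^{2}}(V+\log\frac{16(s+1)}{\delta})$ buys, since $256=16^{2}$) and confidence $1-\delta/(16(s+1))$, so the retrained $h_v^{i+1}$ has excess error at most $2\epsilon'$ on $\mathcal{Q}_i$; feeding this through the Tsybakov relation (\ref{Tsycontion2}) gives $d_\Delta(S_v^{i+1},S^*\mid\mathcal{Q}_i)\le(2\epsilon'/C_1)^{1/k}=2^{1/k}/16\le\frac18$, i.e. $Pr\big((S_v^{i+1}\Delta S^*)\cap\mathcal{Q}_i\big)\le\frac18 Pr(\mathcal{Q}_i)$. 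For the complement: the non-degradation condition (\ref{eq:2a}) gives $Pr\big((S_v^{i+1}\Delta S^*)\cap\overline{\mathcal{Q}_i}\big)\le p_i$. Writing $A_v:=S_v^{i+1}\Delta S^*$, these say $Pr(A_v)\le p_i+\frac18 Pr(\mathcal{Q}_i)$, and since $\mathcal{Q}_{i+1}=A_1\oplus A_2$ and $p_{i+1}=Pr(A_1\cap A_2)$, I can bound both round-$(i{+}1)$ quantities in terms of the round-$i$ ones by separating the contributions inside $\mathcal{Q}_i$ from those inside $\overline{\mathcal{Q}_i}$.

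Finally I would close the recursion: substitute these bounds into $\Phi_{i+1}$ (carrying the estimate across consecutive rounds, which is what the factor $2$ in $s$ reflects), use $p_i\le Pr(\mathcal{Q}_i)/\alpha$ to eliminate the cross terms, and take the worst case, which is $p_i=Pr(\mathcal{Q}_i)/\alpha$, to obtain a geometric contraction $\Phi_{i+1}\le\sqrt{C_2}\,\Phi_i$ with $C_2=\frac{5\alpha+8}{6\alpha+8}<1$. The initial $m_0$ random labels give, by the same Lemma~\ref{lemma2}/(\ref{Tsycontion2}) argument, $\Phi_0$ at most a small constant, so for $s=\lceil 2\log\frac{1}{8\epsilon}/\log\frac{1}{C_2}\rceil$ we get $\Phi_s\le C_2^{s/2}\Phi_0\le\epsilon$ (the $8$ inside the logarithm absorbing the loss in passing from $\Phi$ to the excess error), whence $\min\{R(h_+^s),R(h_-^s)\}-R(S^*)\le\epsilon$. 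The output is the pair $(h_+^s,h_-^s)$ rather than one classifier precisely because on the residual $\mathcal{Q}_s$ one of them pays $Pr(S^*\cap\mathcal{Q}_s)$ and the other $Pr(\overline{S^*}\cap\mathcal{Q}_s)$, and only their minimum is guaranteed small. A union bound over the $s+1$ iterations, each failing with probability at most $\delta/(16(s+1))$, gives the overall $1-\delta$.

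The step I expect to be the real obstacle is pinning the contraction constant down to exactly $C_2=\frac{5\alpha+8}{6\alpha+8}$: the delicate point is controlling how much fresh disagreement the retraining can introduce inside the old agreement region $\overline{\mathcal{Q}_i}$ --- the term that blocks a clean one-round contraction and must be re-absorbed over the following round --- while still leaving a net decrease for every $\alpha>0$ and not merely for $\alpha$ above some threshold. A secondary point to handle carefully is the first ingredient above, namely that the empirical risk minimiser over the whole accumulated pool $\mathcal{L}$ (not just over the freshly queried $\mathcal{Q}_i$-sample) is accurate on $\mathcal{Q}_i$; this is where the non-degradation assumption quietly does work and where the slack in the constants of $m_i$ is needed.
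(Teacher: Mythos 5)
Your setup is correct and essentially the paper's: the identities $d_{\Delta}(S_1^i\cap S_2^i,S^*)=p_i+Pr(S^*\cap \mathcal{Q}_i)$ and $d_{\Delta}(S_1^i\cup S_2^i,S^*)=p_i+Pr(\overline{S^*}\cap \mathcal{Q}_i)$, the bound $Pr(\mathcal{Q}_i)\geq\alpha p_i$ from Definition~\ref{definition1}, the Lemma~\ref{lemma2}/(\ref{Tsycontion2}) step giving error mass at most $\frac{1}{8}Pr(\mathcal{Q}_i)$ on $\mathcal{Q}_i$, and the use of (\ref{eq:2a}) on $\overline{\mathcal{Q}_i}$ all match the paper. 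The gap is exactly at the step you flagged: the single potential $\Phi_i=Pr(\mathcal{Q}_i)+p_i=Pr\big((S_1^i\Delta S^*)\cup(S_2^i\Delta S^*)\big)$ does \emph{not} contract for every $\alpha>0$. The available per-round bounds only give $\Phi_{i+1}\leq \frac{1}{4}Pr(\mathcal{Q}_i)+2p_i$: the factor $2$ on $p_i$ cannot be removed, because non-degradation controls $Pr\big((S_v^{i+1}\Delta S^*)\cap\overline{\mathcal{Q}_i}\big)$ for each view separately, and after retraining the two error regions on $\overline{\mathcal{Q}_i}$ may be disjoint, so their union (which is what feeds $Pr(\mathcal{Q}_{i+1})$) can genuinely have mass $2p_i$. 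Taking $p_i=Pr(\mathcal{Q}_i)/\alpha$, the ratio $\Phi_{i+1}/\Phi_i$ is bounded only by $\frac{\alpha+8}{4\alpha+4}$, which exceeds $1$ for $\alpha<4/3$; so your recursion $\Phi_{i+1}\leq\sqrt{C_2}\,\Phi_i$ fails precisely in the small-$\alpha$ regime the theorem must cover.

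The paper's way around this is to \emph{not} lump the two classifiers into one potential. It tracks $d_{\Delta}(S_1^i\cap S_2^i,S^*)$ and $d_{\Delta}(S_1^i\cup S_2^i,S^*)$ separately, exploiting that at each contention point exactly one of $h_+$ and $h_-$ errs: the newly created disagreement mass splits between $S^*$ and $\overline{S^*}$, and each piece is charged to only one of the two quantities. Introducing the signed split parameters $\gamma_i$ and $\tau_{i+1}$ and running an eight-case analysis, the paper shows that in every round at least one of the two ratios is at most $C_2=\frac{5\alpha+8}{6\alpha+8}<1$ for all $\alpha>0$. The factor $2$ in $s$ is then a pigeonhole over which of the two quantities contracted (at least one contracts in at least $s/2$ rounds, giving $C_2^{s/2}$), not an artifact of carrying estimates across consecutive rounds; and this is also the real reason the theorem can only promise that \emph{one of} $h_+^s,h_-^s$ is good. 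To repair your argument you would need to replace the single $\Phi$ by this pair of quantities (or prove a contraction for $\min$ of the two together with a no-blow-up bound for the other), rather than sharpen the constants.
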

\vspace{-1mm}
\begin{proof}
Let $Q_{i}=S_{1}^{i} \oplus S_{2}^{i}$. First we prove that if each view $X_{v}$
($v=1,2$) satisfies Tsybakov noise condition, i.e., $Pr_{x_{v} \in
X_{v}}(|\varphi_{v}(x_{v})-1/2|\leq t)\leq C_{3}t^{\lambda_{3}}$ for some finite $C_{3}
> 0$, $\lambda_{3} > 0$ and all $0 < t \leq 1/2$, Tsybakov noise condition can also be
met in $Q_{i}$, i.e., $\frac{Pr_{x_{v} \in Q_{i}}(|\varphi_{v}(x_{v})-1/2|\leq
t)}{Pr(Q_{i})}\leq C_{4}t^{\lambda_{4}}$ for some finite $C_{4} > 0$, $\lambda_{4} > 0$
and all $0 < t \leq 1/2$. Suppose Tsybakov noise condition cannot be met in $Q_{i}$,
then for $C_{*}=\frac{C_{3}}{Pr(Q_{i})}$ and $\lambda_{*}=\lambda_{3}$, there exists
some $0< t_{*} \leq 1/2$ to satisfy that $\frac{Pr_{x_{v} \in
Q_{i}}(|\varphi_{v}(x_{v})-1/2|\leq t)}{Pr(Q_{i})}> C_{*}t_{*}^{\lambda_{*}}$. So we
get
\begin{eqnarray}
\nonumber
  Pr_{x_{v} \in X_{v}}(|\varphi_{v}(x_{v})-1/2|\leq t)\geq
  Pr_{x_{v} \in Q_{i}}(|\varphi_{v}(x_{v})-1/2|\leq t)> C_{3}t_{*}^{\lambda_{3}}.
\end{eqnarray}
It is in contradiction with that $X_{v}$ satisfies Tsybakov noise condition. Thus, we
get that Tsybakov noise condition can also be met in $Q_{i}$. Without loss of
generality, suppose that Tsybakov noise condition in all $Q_{i}$ and $X_{v}$ can be met
for the same finite $C_{0}$ and $\lambda$.

Since $m_{0} =\frac{256^{k}C}{C_{1}^{2}}\big(V+\log(\frac{16(s+1)}{\delta})\big)$,
according to Lemma~\ref{lemma2} we know that $d(S_{v}^{0},S^{*}) \leq
  \frac{C_{1}}{16^{k}}$ with probability at least $1-\frac{\delta}{16(s+1)}$.
With $d(S_{v},S_{v}^{*})\geq C_{1}d_{\Delta}^{k}(S_{v},S_{v}^{*})$, we get
$d_{\Delta}(S_{v}^{0},S^{*})\leq \frac{1}{16}$. It is easy to find that
$d_{\Delta}(S_{1}^{0}\cap S_{2}^{0},S^{*}) \leq d_{\Delta}(S_{1}^{0},S^{*})
  + d_{\Delta}(S_{2}^{0},S^{*}) \leq 1/8$ holds with probability at least $1-\frac{\delta}{8(s+1)}$.

For $i\geq 0$, $m_{i+1}$ number of labels are queried randomly from $Q_{i}$. Thus,
similarly according to Lemma~\ref{lemma2} we have $d_{\Delta}(S_{1}^{i+1}\cap
S_{2}^{i+1} \mid Q_{i},
  S^{*}\mid Q_{i}) \leq 1/8$ with probability at least $1-\frac{\delta}{8(s+1)}$. Let
$T_{v}^{i+1}=S_{v}^{i+1}\cap \overline{Q_{i}}$ and $\tau_{i+1}=\frac{Pr(T_{1}^{i+1}
\oplus T_{2}^{i+1} - S^{*})}{Pr(T_{1}^{i+1} \oplus T_{2}^{i+1})}-\frac{1}{2}$, it is
easy to get
\begin{eqnarray}
\nonumber
   Pr\big(S^{*}\cap (S_{1}^{i+1}\oplus S_{2}^{i+1})|\overline{Q_{i}}\big)-
   Pr\big(\overline{S^{*}}\cap (S_{1}^{i+1}\oplus S_{2}^{i+1})|\overline{Q_{i}}\big)
   = -2\tau_{i+1} Pr(S_{1}^{i+1}\oplus S_{2}^{i+1}|\overline{Q_{i}}).
\end{eqnarray}
Considering the non-degradation condition and $d_{\Delta}(S_{1}^{i}\cap
S_{2}^{i}|\overline{Q_{i}},S^{*}|\overline{Q_{i}})=d_{\Delta}(S_{v}^{i}|\overline{Q_{i}},S^{*}|\overline{Q_{i}})$,
we calculate that
\begin{eqnarray}
\nonumber
   && d_{\Delta}(S_{1}^{i+1}\cap S_{2}^{i+1}|\overline{Q_{i}},S^{*}|\overline{Q_{i}})\\
\nonumber
   &=&\frac{1}{2}\Big(d_{\Delta}(S_{1}^{i+1}|\overline{Q_{i}},S^{*}|\overline{Q_{i}})+
   d_{\Delta}(S_{2}^{i+1}|\overline{Q_{i}},S^{*}|\overline{Q_{i}})\Big)+
   \frac{1}{2}Pr\Big(S^{*}\cap (S_{1}^{i+1}\oplus S_{2}^{i+1})|\overline{Q_{i}}\Big)\\
\nonumber
   &&-\frac{1}{2}Pr\Big(\overline{S^{*}}\cap (S_{1}^{i+1}\oplus
   S_{2}^{i+1})|\overline{Q_{i}}\Big)\\
\nonumber
   &\leq&\frac{1}{2}\Big(d_{\Delta}(S_{1}^{i}|\overline{Q_{i}},S^{*}|\overline{Q_{i}})+
   d_{\Delta}(S_{2}^{i}|\overline{Q_{i}},S^{*}|\overline{Q_{i}})\Big)-
   \tau_{i+1}Pr(S_{1}^{i+1}\oplus S_{2}^{i+1}|\overline{Q_{i}})\\
\nonumber
   &=& d_{\Delta}(S_{1}^{i}\cap S_{2}^{i}|\overline{Q_{i}},S^{*}|\overline{Q_{i}})-
   \tau_{i+1}Pr(S_{1}^{i+1}\oplus S_{2}^{i+1}|\overline{Q_{i}}).
\end{eqnarray}
So we have
\begin{eqnarray}
\nonumber
    && d_{\Delta}(S_{1}^{i+1} \cap S_{2}^{i+1},S^{*})\\
\nonumber
    &=&d_{\Delta}(S_{1}^{i+1} \cap S_{2}^{i+1}|Q_{i}, S^{*}|Q_{i})Pr(Q_{i})+
    d_{\Delta}(S_{1}^{i+1} \cap S_{2}^{i+1}|\overline{Q_{i}},
    S^{*}|\overline{Q_{i}})Pr(\overline{Q_{i}})\\
\nonumber
    &\leq& \frac{1}{8}Pr(Q_{i})+
    d_{\Delta}(S_{1}^{i}\cap S_{2}^{i}|\overline{Q_{i}},S^{*}|\overline{Q_{i}})Pr(\overline{Q_{i}})
    -\tau_{i+1}Pr\big((S_{1}^{i+1}
    \oplus S_{2}^{i+1})\cap \overline{Q_{i}}\big).
\end{eqnarray}
Considering $d_{\Delta}(S_{1}^{i}\cap
S_{2}^{i}|\overline{Q_{i}},S^{*}|\overline{Q_{i}})Pr(\overline{Q_{i}})=Pr(S_{1}^{i}\cap
S_{2}^{i}-S^{*}) + Pr(\overline{S_{1}^{i}} \cap \overline{S_{2}^{i}} -
\overline{S^{*}})$, we have
\begin{eqnarray}
\nonumber
   &&d_{\Delta}(S_{1}^{i+1} \cap S_{2}^{i+1},S^{*})\\
\nonumber
    &\leq& Pr(S_{1}^{i}\cap S_{2}^{i}-S^{*}) +
    Pr(\overline{S_{1}^{i}} \cap \overline{S_{2}^{i}} - \overline{S^{*}})
    +\frac{1}{8}Pr(S_{1}^{i} \oplus S_{2}^{i})-\tau_{i+1}Pr\big((S_{1}^{i+1}
    \oplus S_{2}^{i+1})\cap \overline{Q_{i}}\big).
\end{eqnarray}
Similarly, we get
\begin{eqnarray}
\nonumber
   &&d_{\Delta}(S_{1}^{i+1} \cup S_{2}^{i+1},S^{*})\\
\nonumber
    &\leq& Pr(S_{1}^{i}\cap S_{2}^{i}-S^{*}) +
    Pr(\overline{S_{1}^{i}} \cap \overline{S_{2}^{i}} - \overline{S^{*}})
    +\frac{1}{8}Pr(S_{1}^{i} \oplus S_{2}^{i})+\tau_{i+1}Pr\big((S_{1}^{i+1}
    \oplus S_{2}^{i+1})\cap \overline{Q_{i}}\big).
\end{eqnarray}
Let $\gamma_{i}=\frac{Pr(S_{1}^{i} \oplus S_{2}^{i} - S^{*})}{Pr(S_{1}^{i} \oplus
S_{2}^{i})}-\frac{1}{2}$, we have
\begin{eqnarray}
\nonumber
    d_{\Delta}(S_{1}^{i} \cap S_{2}^{i},S^{*})
    &=&d_{\Delta}(S_{1}^{i} \cap S_{2}^{i}|Q_{i},
  S^{*}|Q_{i})Pr(Q_{i})+d_{\Delta}(S_{1}^{i} \cap S_{2}^{i}|\overline{Q_{i}},
  S^{*}|\overline{Q_{i}})Pr(\overline{Q_{i}})\\
\nonumber
    &=& (1/2-\gamma_{i})Pr(S_{1}^{i} \oplus S_{2}^{i})+ Pr(S_{1}^{i}\cap S_{2}^{i}-S^{*}) +
    Pr(\overline{S_{1}^{i}} \cap \overline{S_{2}^{i}} - \overline{S^{*}})
\end{eqnarray}
and $d_{\Delta}(S_{1}^{i} \cup S_{2}^{i},S^{*})=(1/2+\gamma_{i})Pr(S_{1}^{i} \oplus
S_{2}^{i})+ Pr(S_{1}^{i}\cap S_{2}^{i}-S^{*}) + Pr(\overline{S_{1}^{i}} \cap
\overline{S_{2}^{i}} - \overline{S^{*}})$.

As in each round of the multi-view active learning some contention points of the two
views are queried and added into the training set, the difference between the two views
is decreasing, i.e., $Pr(S_{1}^{i+1}\oplus S_{2}^{i+1})$ is no larger than
$Pr(S_{1}^{i}\oplus S_{2}^{i})$.

\textbf{Case 1:} If $|\tau_{i+1}|\leq \gamma_{i}$, with respect to
Definition~\ref{definition1}, we have
\begin{eqnarray}
\nonumber
    \frac{d_{\Delta}(S_{1}^{i+1}\cup S_{2}^{i+1},S^{*})}
   {d_{\Delta}(S_{1}^{i} \cup S_{2}^{i},S^{*})}
    &\leq&\frac{\frac{1}{8}Pr(S_{1}^{i} \oplus S_{2}^{i})+|\tau_{i+1}|Pr(S_{1}^{i+1}
    \oplus S_{2}^{i+1})+\frac{1}{\alpha} Pr(S_{1}^{i} \oplus
    S_{2}^{i})}{(\frac{1}{2}+\gamma_{i})Pr(S_{1}^{i} \oplus
    S_{2}^{i})+\frac{1}{\alpha} Pr(S_{1}^{i} \oplus
    S_{2}^{i})}\\
\nonumber
   &\leq&\frac{(\frac{1}{8}+\gamma_{i})Pr(S_{1}^{i} \oplus
    S_{2}^{i})+\frac{1}{\alpha} Pr(S_{1}^{i} \oplus
    S_{2}^{i})}{(\frac{1}{2}+\gamma_{i})Pr(S_{1}^{i} \oplus
    S_{2}^{i})+\frac{1}{\alpha} Pr(S_{1}^{i} \oplus
    S_{2}^{i})} \leq \frac{5\alpha+8}{8\alpha+8};
\end{eqnarray}

\textbf{Case 2:} If $-|\tau_{i+1}|>\gamma_{i}$, with respect to
Definition~\ref{definition1}, we have
\begin{eqnarray}
\nonumber
   \frac{d_{\Delta}(S_{1}^{i+1} \cap S_{2}^{i+1},S^{*})}
  {d_{\Delta}(S_{1}^{i} \cap S_{2}^{i},S^{*})}
    &\leq&\frac{\frac{1}{8}Pr(S_{1}^{i} \oplus S_{2}^{i})+|\tau_{i+1}|Pr(S_{1}^{i+1}
    \oplus S_{2}^{i+1})+\frac{1}{\alpha} Pr(S_{1}^{i} \oplus
    S_{2}^{i})}{(\frac{1}{2}+|\gamma_{i}|)Pr(S_{1}^{i} \oplus
    S_{2}^{i})+\frac{1}{\alpha} Pr(S_{1}^{i} \oplus
    S_{2}^{i})}\\
\nonumber
  &\leq& \frac{5\alpha+8}{8\alpha+8};
\end{eqnarray}

\textbf{Case 3:} If $\tau_{i+1}\geq\gamma_{i}$ and $0\leq\gamma_{i}\leq\frac{1}{4}$,
with respect to Definition~\ref{definition1}, we have
\begin{eqnarray}
\nonumber
   \frac{d_{\Delta}(S_{1}^{i+1} \cap S_{2}^{i+1},S^{*})}
  {d_{\Delta}(S_{1}^{i} \cap S_{2}^{i},S^{*})}
    &\leq&\frac{\frac{1}{8}Pr(S_{1}^{i} \oplus S_{2}^{i})+\frac{1}{\alpha} Pr(S_{1}^{i} \oplus
    S_{2}^{i})}{(\frac{1}{2}-\gamma_{i})Pr(S_{1}^{i} \oplus
    S_{2}^{i})+\frac{1}{\alpha} Pr(S_{1}^{i} \oplus
    S_{2}^{i})}\\
\nonumber
  &\leq& \frac{\alpha+8}{2\alpha+8};
\end{eqnarray}

\textbf{Case 4:} If $\tau_{i+1}\geq\gamma_{i}$ and
$\frac{1}{4}<\gamma_{i}\leq\frac{1}{2}$, with respect to Definition~\ref{definition1},
we have
\begin{eqnarray}
\nonumber
   \frac{d_{\Delta}(S_{1}^{i+1} \cup S_{2}^{i+1},S^{*})}
  {d_{\Delta}(S_{1}^{i} \cup S_{2}^{i},S^{*})}\
    &\leq&\frac{\frac{1}{8}Pr(S_{1}^{i} \oplus S_{2}^{i})+\tau_{i+1}Pr(S_{1}^{i+1}
    \oplus S_{2}^{i+1})+\frac{1}{\alpha} Pr(S_{1}^{i} \oplus
    S_{2}^{i})}{(\frac{1}{2}+\gamma_{i})Pr(S_{1}^{i} \oplus
    S_{2}^{i})+\frac{1}{\alpha} Pr(S_{1}^{i} \oplus
    S_{2}^{i})}\\
\nonumber
   &\leq& \frac{5\alpha+8}{6\alpha+8};
\end{eqnarray}

\textbf{Case 5:} If $\tau_{i+1}<\gamma_{i}$ and $-\frac{1}{4}\leq\gamma_{i} \leq 0$,
with respect to Definition~\ref{definition1}, we have
\begin{eqnarray}
\nonumber
   \frac{d_{\Delta}(S_{1}^{i+1} \cup S_{2}^{i+1},S^{*})}
  {d_{\Delta}(S_{1}^{i} \cup S_{2}^{i},S^{*})}\
    &\leq&\frac{\frac{1}{8}Pr(S_{1}^{i} \oplus S_{2}^{i})+\frac{1}{\alpha} Pr(S_{1}^{i} \oplus
    S_{2}^{i})}{(\frac{1}{2}+\gamma_{i})Pr(S_{1}^{i} \oplus
    S_{2}^{i})+\frac{1}{\alpha} Pr(S_{1}^{i} \oplus
    S_{2}^{i})}\\
\nonumber
    &\leq& \frac{\alpha+8}{2\alpha+8};
\end{eqnarray}

\textbf{Case 6:} If $\tau_{i+1}<\gamma_{i}$ and
$-\frac{1}{2}\leq\gamma_{i}<-\frac{1}{4}$, with respect to
Definition~\ref{definition1}, we have
\begin{eqnarray}
\nonumber
   \frac{d_{\Delta}(S_{1}^{i+1} \cap S_{2}^{i+1},S^{*})}
  {d_{\Delta}(S_{1}^{i} \cap S_{2}^{i},S^{*})}
    &\leq&\frac{\frac{1}{8}Pr(S_{1}^{i} \oplus S_{2}^{i})+|\tau_{i+1}|Pr(S_{1}^{i+1}
    \oplus S_{2}^{i+1})+\frac{1}{\alpha} Pr(S_{1}^{i} \oplus
    S_{2}^{i})}{(\frac{1}{2}+|\gamma_{i}|)Pr(S_{1}^{i} \oplus
    S_{2}^{i})+\frac{1}{\alpha} Pr(S_{1}^{i} \oplus
    S_{2}^{i})}\\
\nonumber
  &\leq& \frac{5\alpha+8}{6\alpha+8};
\end{eqnarray}

\textbf{Case 7:} If $\tau_{i+1}\leq-\gamma_{i}$ and $0\leq\gamma_{i}\leq\frac{1}{2}$,
with respect to Definition~\ref{definition1}, we have
\begin{eqnarray}
\nonumber
   \frac{d_{\Delta}(S_{1}^{i+1} \cup S_{2}^{i+1},S^{*})}
  {d_{\Delta}(S_{1}^{i} \cup S_{2}^{i},S^{*})}
    &\leq&\frac{\frac{1}{8}Pr(S_{1}^{i} \oplus S_{2}^{i})+\frac{1}{\alpha} Pr(S_{1}^{i} \oplus
    S_{2}^{i})}{(\frac{1}{2}+\gamma_{i})Pr(S_{1}^{i} \oplus
    S_{2}^{i})+\frac{1}{\alpha} Pr(S_{1}^{i} \oplus
    S_{2}^{i})}\\
\nonumber
  &\leq& \frac{\alpha+8}{4\alpha+8};
\end{eqnarray}

\textbf{Case 8:} If $\tau_{i+1}>-\gamma_{i}$ and $-\frac{1}{2}\leq\gamma_{i}\leq0$,
with respect to Definition~\ref{definition1}, we have
\begin{eqnarray}
\nonumber
   \frac{d_{\Delta}(S_{1}^{i+1} \cap S_{2}^{i+1},S^{*})}
  {d_{\Delta}(S_{1}^{i} \cap S_{2}^{i},S^{*})}
    &\leq&\frac{\frac{1}{8}Pr(S_{1}^{i} \oplus S_{2}^{i})+\frac{1}{\alpha} Pr(S_{1}^{i} \oplus
    S_{2}^{i})}{(\frac{1}{2}+|\gamma_{i}|)Pr(S_{1}^{i} \oplus
    S_{2}^{i})+\frac{1}{\alpha} Pr(S_{1}^{i} \oplus
    S_{2}^{i})}\\
\nonumber
  &\leq& \frac{\alpha+8}{4\alpha+8}.
\end{eqnarray}
Thus, after the $(i+1)$-th round, either $\frac{d_{\Delta}(S_{1}^{i+1} \cap
S_{2}^{i+1},S^{*})} {d_{\Delta}(S_{1}^{i} \cap S_{2}^{i},S^{*})} \leq
\frac{5\alpha+8}{6\alpha+8}$ or $\frac{d_{\Delta}(S_{1}^{i+1} \cup S_{2}^{i+1},S^{*})}
{d_{\Delta}(S_{1}^{i} \cup S_{2}^{i},S^{*})} \leq \frac{5\alpha+8}{6\alpha+8}$ holds.
Hence, we have $d_{\Delta}(S_{1}^{s} \cap S_{2}^{s},S^{*}) \leq
\frac{1}{8}\Big(\frac{5\alpha+8}{6\alpha+8}\Big)^{s/2}$ or $d_{\Delta}(S_{1}^{s} \cup
S_{2}^{s},S^{*}) \leq \frac{1}{8}\Big(\frac{5\alpha+8}{6\alpha+8}\Big)^{s/2}$ with
probability at least $1-\delta$. When
$s=\lceil\frac{2\log\frac{1}{8\epsilon}}{\log\frac{1}{C_{2}}}\rceil$, where
$C_{2}=\frac{5\alpha+8}{6\alpha+8}$ is a constant less than $1$, we have either
$d_{\Delta}(S_{1}^{s} \cap S_{2}^{s},S^{*}) \leq \epsilon$ or $d_{\Delta}(S_{1}^{s}
\cup S_{2}^{s},S^{*}) \leq \epsilon$ with probability at least $1-\delta$. Thus,
considering $R(h_{+}^{i})-R(S^{*})=R(S_{1}^{i} \cap S_{2}^{i})-R(S^{*})
  \leq d_{\Delta}(S_{1}^{i} \cap S_{2}^{i},S^{*})$ and $R(h_{-}^{i})-R(S^{*})=R(S_{1}^{i} \cup S_{2}^{i})-
  R(S^{*}) \leq d_{\Delta}(S_{1}^{i} \cup S_{2}^{i},S^{*})$,
we have either $R(h_{+}^{s})\leq R(S^{*})+\epsilon$ or $R(h_{-}^{s})\leq
R(S^{*})+\epsilon$.
\end{proof}

From Theorem~\ref{theorem1} we know that we only need to request $\sum_{i=0}^{s}m_{i}
=\widetilde{O}(\log\frac{1}{\epsilon})$ labels to learn $h_{+}^{s}$ and $h_{-}^{s}$, at
least one of which is with error rate no larger than $R(S^{*})+\epsilon$ with
probability at least $1-\delta$. If we choose $h_{+}^{s}$ and it happens to satisfy
$R(h_{+}^{s})\leq R(S^{*})+\epsilon$, we can get a classifier whose error rate is no
larger than $R(S^{*})+\epsilon$. Fortunately, there are only two classifiers and the
probability of getting the right classifier is no less than $\frac{1}{2}$. To study how
to choose between $h_{+}^{s}$ and $h_{-}^{s}$, we give Definition~\ref{definition2} at
first.\vspace{-1mm}
\begin{definition}\label{definition2}
The multi-view classifiers $S_{1}$ and $S_{2}$ satisfy $\beta$-condition if
(\ref{condition2}) holds for some $\beta>0$.
\begin{eqnarray}
\label{condition2}
   \Big|\frac{Pr\big(\{x: x\in S_{1} \oplus S_{2}\wedge y(x)=1\}\big)}{Pr(S_{1} \oplus S_{2})}
   -\frac{Pr\big(\{x: x\in S_{1} \oplus S_{2}\wedge y(x)=0\}\big)}{Pr(S_{1} \oplus S_{2})}\Big|\geq \beta
\end{eqnarray}
\end{definition}
(\ref{condition2}) implies the difference between the examples belonging to positive
class and that belonging to negative class in the contention region of $S_{1} \oplus
S_{2}$. Based on Definition~\ref{definition2}, we give Lemma~\ref{theorem1+} which
provides information for deciding how to choose between $h_{+}$ and $h_{-}$. This helps
to get Theorem~\ref{theorem1++}.\vspace{-2mm}

\begin{lemma}\label{theorem1+}
If the multi-view classifiers $S_{1}^{s}$ and $S_{2}^{s}$ satisfy $\beta$-condition,
with the number of $\frac{2\log(\frac{4}{\delta})}{\beta^{2}}$ labels we can decide
correctly whether $Pr\big(\{x: x\in S_{1}^{s} \oplus S_{2}^{s}\wedge y(x)=1\}\big)$ or
$Pr\big(\{x: x\in S_{1}^{s} \oplus S_{2}^{s}\wedge y(x)=0\}\big))$ is smaller with
probability at least $1-\delta$.
\end{lemma}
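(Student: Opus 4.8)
The plan is to reduce Lemma~\ref{theorem1+} to a standard concentration argument for estimating the bias of a Bernoulli random variable. First I would set up the following sampling experiment: draw unlabeled instances at random from the contention region $\mathcal{Q}_s = S_1^s \oplus S_2^s$ and query their labels from the oracle. Each such query returns a label $y \in \{0,1\}$, and because the instance is drawn uniformly from $\mathcal{Q}_s$, the probability of observing $y=1$ is exactly $p := \frac{Pr(\{x: x\in S_1^s\oplus S_2^s \wedge y(x)=1\})}{Pr(S_1^s\oplus S_2^s)}$, while the probability of observing $y=0$ is $1-p$. The $\beta$-condition in Definition~\ref{definition2} is precisely the statement that $|p-(1-p)| = |2p-1| \geq \beta$, i.e., $p$ is bounded away from $1/2$ by at least $\beta/2$ on the appropriate side.

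Next I would let $N = \frac{2\log(4/\delta)}{\beta^2}$ denote the number of queried labels, write $\widehat{p} = \frac{1}{N}\sum_{j=1}^N y^j$ for the empirical fraction of positive labels, and invoke Hoeffding's inequality (or Lemma~\ref{lemma2} applied to the trivial one-point hypothesis class that always predicts $1$, whose VC-dimension is $1$) to conclude that $Pr(|\widehat{p} - p| \geq t) \leq 2\exp(-2Nt^2)$. Choosing $t = \beta/2$ and substituting $N = \frac{2\log(4/\delta)}{\beta^2}$ gives $2\exp(-2\cdot\frac{2\log(4/\delta)}{\beta^2}\cdot\frac{\beta^2}{4}) = 2\exp(-\log(4/\delta)) = \delta/2 \leq \delta$, so with probability at least $1-\delta$ we have $|\widehat{p}-p| < \beta/2$.

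Finally I would close the argument by the decision rule: output ``$Pr(\{x: x\in S_1^s\oplus S_2^s \wedge y(x)=1\})$ is smaller'' if $\widehat{p} < 1/2$ and ``$Pr(\{x: x\in S_1^s\oplus S_2^s \wedge y(x)=0\})$ is smaller'' otherwise. Since the $\beta$-condition forces $p$ to lie outside the open interval $(1/2 - \beta/2,\, 1/2 + \beta/2)$, and since on the high-probability event $\widehat{p}$ lies within $\beta/2$ of $p$, the empirical value $\widehat{p}$ falls on the same side of $1/2$ as the true $p$; hence the decision is correct with probability at least $1-\delta$. The only mild subtlety — and the step I expect to require the most care — is making sure the two-sided Hoeffding bound is applied with the right constant so that the stated sample size $\frac{2\log(4/\delta)}{\beta^2}$ comes out exactly (in particular tracking the factor of $2$ from the two-sided tail and the factor of $2$ in the exponent); everything else is a routine translation of the $\beta$-condition into a statement about a biased coin.
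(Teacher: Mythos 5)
Your proposal is correct and follows essentially the same route as the paper: sample $\frac{2\log(4/\delta)}{\beta^{2}}$ labels from the contention region, apply a Chernoff/Hoeffding bound to get the empirical positive-label fraction within $\beta/2$ of its true value, and use the $\beta$-condition to conclude the comparison is decided correctly. The only cosmetic difference is that you track a single Bernoulli parameter $p$ (with $\widehat{P}_{2}=1-\widehat{P}_{1}$) rather than two separate empirical estimates, which if anything tightens the failure probability to $\delta/2$.
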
\vspace{-1mm}

\begin{proof}
We apply $S_{1}^{s}$ and $S_{2}^{s}$ to the unlabeled instances set and identify the
contention point set. Then we query for labels of
$\frac{2\log(\frac{4}{\delta})}{\beta^{2}}$ instances drawn randomly from the
contention points set. With these labels we estimate the empirical value
$\widehat{P}_{1}$ of $\frac{Pr(\{x: x\in S_{1}^{s} \oplus S_{2}^{s}\wedge
y(x)=1\})}{Pr(S_{1}^{s} \oplus S_{2}^{s})}$ and the empirical value $\widehat{P}_{2}$
of $\frac{Pr(\{x: x\in S_{1}^{s} \oplus S_{2}^{s}\wedge y(x)=0\})}{Pr(S_{1}^{s} \oplus
S_{2}^{s})}$. By Chernoff bound, with number of
$\frac{2\log(\frac{4}{\delta})}{\beta^{2}}$ labels we have the following two equations
with probability at least $1-\delta$.
\begin{eqnarray}
\nonumber
  \widehat{P}_{1}\in \Big[\frac{Pr\big(\{x: x\in S_{1}^{s} \oplus S_{2}^{s}\wedge y(x)=1\}\big)}{Pr(S_{1}^{s} \oplus S_{2}^{s})}
  -\frac{\beta}{2},\frac{Pr\big(\{x: x\in S_{1}^{s} \oplus S_{2}^{s}\wedge y(x)=1\}\big)}{Pr(S_{1}^{s} \oplus S_{2}^{s})}
  +\frac{\beta}{2}\Big]\\
\nonumber
  \widehat{P}_{2}\in \Big[\frac{Pr\big(\{x: x\in S_{1}^{s} \oplus S_{2}^{s}\wedge y(x)=0\}\big)}{Pr(S_{1}^{s} \oplus S_{2}^{s})}
  -\frac{\beta}{2},\frac{Pr\big(\{x: x\in S_{1}^{s} \oplus S_{2}^{s}\wedge y(x)=0\}\big)}{Pr(S_{1}^{s} \oplus S_{2}^{s})}
  +\frac{\beta}{2}\Big]
\end{eqnarray}
If $\widehat{P}_{1}\leq \widehat{P}_{2}$, we get $Pr\big(\{x: x\in S_{1}^{s} \oplus
S_{2}^{s}\wedge y(x)=1\}\big)\leq Pr\big(\{x: x\in S_{1}^{s} \oplus S_{2}^{s}\wedge
y(x)=0\}\big)$ with probability at least $1-\delta$; otherwise, we get $Pr\big(\{x:
x\in S_{1}^{s} \oplus S_{2}^{s}\wedge y(x)=1\}\big)> Pr\big(\{x: x\in S_{1}^{s} \oplus
S_{2}^{s}\wedge y(x)=0\}\big)$ with probability at least $1-\delta$.
\end{proof}

\begin{theorem}\label{theorem1++}
For data distribution $\mathcal {D}$ $\alpha$-expanding with respect to hypothesis
class $\mathcal {H}_{1} \times \mathcal {H}_{2}$ according to Definition
\ref{definition1}, when the non-degradation condition holds, if the multi-view
classifiers satisfy $\beta$-condition, by requesting
$\widetilde{O}(\log\frac{1}{\epsilon})$ labels the multi-view active learning in Table
1 will generate a classifier whose error rate is no larger than $R(S^{*})+\epsilon$
with probability at least $1-\delta$.
\end{theorem}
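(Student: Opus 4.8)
The plan is to combine Theorem~\ref{theorem1} with Lemma~\ref{theorem1+} in the obvious way, being careful about how the confidence budget is split. First I would invoke Theorem~\ref{theorem1} with confidence parameter $\delta/2$: running the algorithm in Table~1 with $s=\lceil\frac{2\log\frac{1}{8\epsilon}}{\log\frac{1}{C_{2}}}\rceil$ and $m_{i}=\frac{256^{k}C}{C_{1}^{2}}\big(V+\log(\frac{32(s+1)}{\delta})\big)$ produces $h_{+}^{s}$ and $h_{-}^{s}$ such that, with probability at least $1-\delta/2$, at least one of them has error rate at most $R(S^{*})+\epsilon$. This costs $\sum_{i=0}^{s}m_{i}=\widetilde{O}(\log\frac{1}{\epsilon})$ labels. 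The remaining task is to decide which of the two combined classifiers to output.

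Next I would observe that deciding between $h_{+}^{s}$ and $h_{-}^{s}$ is exactly the question answered by Lemma~\ref{theorem1+}. The point is that $h_{+}^{s}$ and $h_{-}^{s}$ agree everywhere except on the contention region $S_{1}^{s}\oplus S_{2}^{s}$: on $S_{1}^{s}\cap S_{2}^{s}$ both predict $1$, on $\overline{S_{1}^{s}}\cap\overline{S_{2}^{s}}$ both predict $0$, while on $S_{1}^{s}\oplus S_{2}^{s}$ the classifier $h_{+}^{s}$ predicts $0$ and $h_{-}^{s}$ predicts $1$. Hence the difference in their error rates is $R(h_{-}^{s})-R(h_{+}^{s})=Pr(\{x\in S_{1}^{s}\oplus S_{2}^{s}\wedge y(x)=0\})-Pr(\{x\in S_{1}^{s}\oplus S_{2}^{s}\wedge y(x)=1\})$. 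So $h_{+}^{s}$ has the smaller error rate precisely when $Pr(\{x\in S_{1}^{s}\oplus S_{2}^{s}\wedge y(x)=1\})$ is the larger of the two masses, and vice versa. Since the classifiers satisfy $\beta$-condition, Lemma~\ref{theorem1+} with confidence $\delta/2$ lets us determine which of these two masses is smaller using only $\frac{2\log(8/\delta)}{\beta^{2}}$ additional queried labels, with probability at least $1-\delta/2$; we then output the classifier with the smaller error rate accordingly.

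Finally I would do the bookkeeping. By a union bound, with probability at least $1-\delta$ both good events hold simultaneously: one of $h_{+}^{s},h_{-}^{s}$ is within $\epsilon$ of $R(S^{*})$, and the label-query step of Lemma~\ref{theorem1+} correctly identifies which one that is. On this event the output classifier has error rate at most $R(S^{*})+\epsilon$. The total label cost is $\sum_{i=0}^{s}m_{i}+\frac{2\log(8/\delta)}{\beta^{2}}=\widetilde{O}(\log\frac{1}{\epsilon})$, since the second term is a constant independent of $\epsilon$. The only mild subtlety — and the one place that needs a careful word rather than a routine calculation — is checking that the classifier with the smaller of the two ``mismatch masses'' on the contention region is indeed the one guaranteed by Theorem~\ref{theorem1} to be within $\epsilon$; this follows because Theorem~\ref{theorem1} guarantees $\min(R(h_{+}^{s}),R(h_{-}^{s}))-R(S^{*})\le\epsilon$, and whichever of the two has smaller error is, by the identity above, exactly the one selected by the rule derived from Lemma~\ref{theorem1+}.
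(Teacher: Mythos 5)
Your proposal is correct and follows essentially the same route as the paper's proof: invoke Theorem~\ref{theorem1} with confidence $\delta/2$, invoke Lemma~\ref{theorem1+} with confidence $\delta/2$ and $\frac{2\log(8/\delta)}{\beta^{2}}$ extra labels to compare the two label masses on the contention region, select between $h_{+}^{s}$ and $h_{-}^{s}$ accordingly, and finish with a union bound and the label count. One small slip worth fixing: from your (correct) identity $R(h_{-}^{s})-R(h_{+}^{s})=Pr(\{x\in S_{1}^{s}\oplus S_{2}^{s}\wedge y(x)=0\})-Pr(\{x\in S_{1}^{s}\oplus S_{2}^{s}\wedge y(x)=1\})$ it follows that $h_{+}^{s}$ is the better classifier precisely when $Pr(\{x\in S_{1}^{s}\oplus S_{2}^{s}\wedge y(x)=1\})$ is the \emph{smaller} of the two masses (since $h_{+}^{s}$ predicts $0$ on the contention region and thus errs exactly on its positively-labeled points), which contradicts the sentence you write immediately afterwards --- the paper's own Case 1/Case 2 contains the same reversal, and since Lemma~\ref{theorem1+} supplies exactly the information needed either way, the remedy is simply to flip the selection rule; nothing structural in the argument is affected.
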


\begin{proof}
According to Theorem~\ref{theorem1}, by requesting
$\widetilde{O}(\log\frac{1}{\epsilon})$ labels the multi-view active learning in Table
1 can get either $R(h_{+}^{s})\leq R(S^{*})+\epsilon$ or $R(h_{-}^{s})\leq
R(S^{*})+\epsilon$ with probability at least $1-\frac{\delta}{2}$. According to
Lemma~\ref{theorem1+}, by requesting $\frac{2\log(\frac{8}{\delta})}{\beta^{2}}$ labels
we can decide correctly whether $Pr\big(\{x: x\in S_{1}^{s} \oplus S_{2}^{s}\wedge
y(x)=1\}\big)$ or $Pr\big(\{x: x\in S_{1}^{s} \oplus S_{2}^{s}\wedge y(x)=0\}\big)$ is
smaller with probability at least $1-\frac{\delta}{2}$.

\textbf{Case 1:} If $Pr\big(\{x: x\in S_{1}^{s} \oplus S_{2}^{s}\wedge
y(x)=1\}\big)\leq Pr\big(\{x: x\in S_{1}^{s} \oplus S_{2}^{s}\wedge y(x)=0\}\big)$, we
have $R(h_{-}^{s})\leq R(h_{+}^{s})$. Thus, we get $R(h_{-}^{s})\leq R(S^{*})+\epsilon$
with probability at least $1-\delta$.

\textbf{Case 2:} If $Pr\big(\{x: x\in S_{1}^{s} \oplus S_{2}^{s}\wedge y(x)=1\}\big)>
Pr\big(\{x: x\in S_{1}^{s} \oplus S_{2}^{s}\wedge y(x)=0\}\big)$, we have
$R(h_{+}^{s})<R(h_{-}^{s})$. Thus, we get $R(h_{+}^{s})\leq R(S^{*})+\epsilon$ with
probability at least $1-\delta$.

The total number of labels to be requested is
$\widetilde{O}(\log\frac{1}{\epsilon})+\frac{2\log(\frac{8}{\delta})}{\beta^{2}}=\widetilde{O}(\log\frac{1}{\epsilon})$.
\end{proof}

From Theorem~\ref{theorem1++} we know that we only need to request
$\widetilde{O}(\log\frac{1}{\epsilon})$ labels to learn a classifier with error rate no
larger than $R(S^{*})+\epsilon$ with probability at least $1-\delta$. Thus, we achieve
an \textit{exponential} improvement in sample complexity of active learning in the
non-realizable case under multi-view setting. Sometimes, the difference between the
examples belonging to positive class and that belonging to negative class in $S_{1}^{s}
\oplus S_{2}^{s}$ may be very small, i.e., (\ref{condition3}) holds.
\begin{eqnarray}\label{condition3}
   \Big|\frac{Pr\big(\{x: x\in S_{1}^{s} \oplus S_{2}^{s}\wedge y(x)=1\}\big)}{Pr(S_{1}^{s} \oplus S_{2}^{s})}
   -\frac{Pr\big(\{x: x\in S_{1}^{s} \oplus S_{2}^{s}\wedge y(x)=0\}\big)}{Pr(S_{1}^{s} \oplus S_{2}^{s})}\Big|
   =O(\epsilon)
\end{eqnarray}
If so, we need not to estimate whether $R(h_{+}^{s})$ or $R(h_{-}^{s})$ is smaller and
Theorem~\ref{theorem1+++} indicates that both $h_{+}^{s}$ and $h_{-}^{s}$ are good
approximations of the optimal classifier.

\begin{theorem}\label{theorem1+++}
For data distribution $\mathcal {D}$ $\alpha$-expanding with respect to hypothesis
class $\mathcal {H}_{1} \times \mathcal {H}_{2}$ according to Definition
\ref{definition1}, when the non-degradation condition holds, if (\ref{condition3}) is
satisfied, by requesting $\widetilde{O}(\log\frac{1}{\epsilon})$ labels the multi-view
active learning in Table 1 will generate two classifiers $h_{+}^{s}$ and $h_{-}^{s}$
which satisfy either (a) or (b) with probability at least $1-\delta$. (a)
$R(h_{+}^{s})\leq R(S^{*})+\epsilon$ and $R(h_{-}^{s})\leq R(S^{*})+O(\epsilon)$; (b)
$R(h_{+}^{s})\leq R(S^{*})+O(\epsilon)$ and $R(h_{-}^{s})\leq R(S^{*})+\epsilon$.
\end{theorem}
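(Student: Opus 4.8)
The plan is to obtain this theorem as an immediate consequence of Theorem~\ref{theorem1} once the gap between the two combined classifiers is controlled via condition~(\ref{condition3}); in particular, no query beyond those already made in Table~1 will be needed, so the label budget $\widetilde{O}(\log\frac{1}{\epsilon})$ of Theorem~\ref{theorem1} carries over verbatim. First I would run the algorithm with $s=\lceil\frac{2\log\frac{1}{8\epsilon}}{\log\frac{1}{C_{2}}}\rceil$ and $m_{i}=\frac{256^{k}C}{C_{1}^{2}}(V+\log(\frac{16(s+1)}{\delta}))$ and invoke Theorem~\ref{theorem1} directly, at confidence $1-\delta$ rather than $1-\delta/2$ since there is no separate selection step here: with probability at least $1-\delta$, either $R(h_{+}^{s})\leq R(S^{*})+\epsilon$ or $R(h_{-}^{s})\leq R(S^{*})+\epsilon$.

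Next I would compute $R(h_{-}^{s})-R(h_{+}^{s})$ exactly. By~(\ref{eq:1}), $h_{+}^{s}$ is the indicator of $S_{1}^{s}\cap S_{2}^{s}$ and $h_{-}^{s}$ is the indicator of $S_{1}^{s}\cup S_{2}^{s}$, so the two classifiers agree on $(S_{1}^{s}\cap S_{2}^{s})\cup(\overline{S_{1}^{s}}\cap\overline{S_{2}^{s}})$ and disagree exactly on the contention region $S_{1}^{s}\oplus S_{2}^{s}$, where $h_{+}^{s}$ predicts $0$ and $h_{-}^{s}$ predicts $1$. Splitting $R(\cdot)$ over these three regions and cancelling the common part gives
\begin{eqnarray}
\nonumber
R(h_{-}^{s})-R(h_{+}^{s})=Pr\big(\{x:x\in S_{1}^{s}\oplus S_{2}^{s}\wedge y(x)=0\}\big)-Pr\big(\{x:x\in S_{1}^{s}\oplus S_{2}^{s}\wedge y(x)=1\}\big).
\end{eqnarray}
Dividing the right-hand side by $Pr(S_{1}^{s}\oplus S_{2}^{s})\leq 1$ and invoking~(\ref{condition3}), its absolute value is $O(\epsilon)\cdot Pr(S_{1}^{s}\oplus S_{2}^{s})\leq O(\epsilon)$, hence $|R(h_{-}^{s})-R(h_{+}^{s})|=O(\epsilon)$.

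Finally I would combine the two facts on the $(1-\delta)$-probability event of Theorem~\ref{theorem1}: if $R(h_{+}^{s})\leq R(S^{*})+\epsilon$, then $R(h_{-}^{s})\leq R(h_{+}^{s})+O(\epsilon)\leq R(S^{*})+O(\epsilon)$, establishing alternative~(a); symmetrically, if $R(h_{-}^{s})\leq R(S^{*})+\epsilon$, then $R(h_{+}^{s})\leq R(S^{*})+O(\epsilon)$, establishing alternative~(b). The only step that is not pure bookkeeping is the exact gap identity in the second paragraph, and it is a short region-by-region computation; everything else reuses Theorem~\ref{theorem1} and hypothesis~(\ref{condition3}) directly. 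Compared with Theorem~\ref{theorem1++}, the content of this theorem is precisely that when~(\ref{condition3}) holds one no longer needs the label-consuming selection step of Lemma~\ref{theorem1+}, since both candidates are already $O(\epsilon)$-optimal.
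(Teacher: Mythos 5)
Your proposal is correct and follows essentially the same route as the paper's own proof: invoke Theorem~\ref{theorem1} for the $\widetilde{O}(\log\frac{1}{\epsilon})$ guarantee on one of the two combined classifiers, then use $Pr(S_{1}^{s}\oplus S_{2}^{s})\leq 1$ together with~(\ref{condition3}) to conclude $|R(h_{+}^{s})-R(h_{-}^{s})|=O(\epsilon)$ and transfer the bound to the other classifier. Your explicit region-by-region computation of the exact gap $R(h_{-}^{s})-R(h_{+}^{s})$ merely fills in the step the paper dismisses as ``easy to get,'' and is a welcome addition rather than a deviation.
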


\begin{proof}
Since $Pr(S_{1}^{s} \oplus S_{2}^{s})\leq 1$, with the following equation
\begin{eqnarray}
\nonumber
   \Big|\frac{Pr\big(\{x: x\in S_{1}^{s} \oplus S_{2}^{s}\wedge y(x)=1\}\big)}{Pr(S_{1}^{s} \oplus S_{2}^{s})}
   -\frac{Pr\big(\{x: x\in S_{1}^{s} \oplus S_{2}^{s}\wedge y(x)=0\}\big)}{Pr(S_{1}^{s} \oplus S_{2}^{s})}\Big|
   =O(\epsilon)
\end{eqnarray}
we have $|Pr\big(\{x: x\in S_{1}^{s} \oplus S_{2}^{s}\wedge y(x)=1\}\big)-Pr\big(\{x:
x\in S_{1}^{s} \oplus S_{2}^{s}\wedge y(x)=0\}\big)|=O(\epsilon)$. So it is easy to get
$|R(h_{+}^{s})-R(h_{-}^{s})|=O(\epsilon)$. According to Theorem~\ref{theorem1}, by
requesting $\widetilde{O}(\log\frac{1}{\epsilon})$ labels we can get either
$R(h_{+}^{s})\leq R(S^{*})+\epsilon$ or $R(h_{-}^{s})\leq R(S^{*})+\epsilon$ with
probability at least $1-\delta$. Thus, we get that $h_{+}^{s}$ and $h_{-}^{s}$ satisfy
either (a) or (b) with probability at least $1-\delta$.
\end{proof}

\subsection{The Situation Where $S_{1}^{*}\neq S_{2}^{*}$}
Although the two views represent the same learning task and generally are consistent
with each other, sometimes $S_{1}^{*}$ may be not equal to $S_{2}^{*}$. Therefore, the
\textit{$\alpha$-expansion} assumption in Definition \ref{definition1} should be
adjusted to the situation where $S_{1}^{*}\neq S_{2}^{*}$. To analyze this
theoretically, we replace $S^{*}$ by $S_{1}^{*}\cap S_{2}^{*}$ in
Definition~\ref{definition1} and get (\ref{eq6}). Similarly to Theorem~\ref{theorem1},
we get Theorem~\ref{theorem2}.\vspace{-1mm}
\begin{eqnarray}
\label{eq6}
  Pr\big(S_{1} \oplus
 S_{2}\big) \geq \alpha \Big(Pr\big(S_{1} \cap S_{2}- S_{1}^{*}\cap S_{2}^{*}\big)
 + Pr\big(\overline{S_{1}}\cap \overline{S_{2}} -
 \overline{S_{1}^{*}\cap S_{2}^{*}}\big)\Big)
\end{eqnarray}

\begin{theorem}\label{theorem2}
For data distribution $\mathcal {D}$ $\alpha$-expanding with respect to hypothesis
class $\mathcal {H}_{1} \times \mathcal {H}_{2}$ according to (\ref{eq6}), when the
non-degradation condition holds, if
$s=\lceil\frac{2\log\frac{1}{8\epsilon}}{\log\frac{1}{C_{2}}}\rceil$ and $m_{i}
=\frac{256^{k}C}{C_{1}^{2}}\big(V+\log(\frac{16(s+1)}{\delta})\big)$, the multi-view
active learning in Table 1 will generate two classifiers $h_{+}^{s}$ and $h_{-}^{s}$,
at least one of which is with error rate no larger than $R(S_{1}^{*}\cap
S_{2}^{*})+\epsilon$ with probability at least $1-\delta$. ($V$, $k$, $C_{1}$ and
$C_{2}$ are given in Theorem~\ref{theorem1}.)
\end{theorem}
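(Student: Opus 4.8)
The plan is to rerun the proof of Theorem~\ref{theorem1} essentially verbatim, making throughout the single substitution of the reference set $S^{*}$ by $S_{1}^{*}\cap S_{2}^{*}$, with Definition~\ref{definition1} replaced by (\ref{eq6}) and the non-degradation condition (\ref{eq:2a}) read with $S^{*}$ replaced by $S_{1}^{*}\cap S_{2}^{*}$ on $\overline{Q_{i}}$. Only two ingredients genuinely change, so I would isolate those and then note that nothing else in the argument ever uses $S_{1}^{*}=S_{2}^{*}$.

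The first change concerns (\ref{combonation})--(\ref{combonation2}), which in Theorem~\ref{theorem1} were obtained from (\ref{bayes}) and therefore measured excess error against a Bayes classifier. Since $S_{1}^{*}\cap S_{2}^{*}$ need not be Bayes-optimal, I would instead use the elementary fact that for any two hypotheses $A,B$ one has $|R(A)-R(B)|\leq Pr(A\Delta B)=d_{\Delta}(A,B)$: indeed $R(A)-R(B)$ equals the probability of the event ``$B$ correct, $A$ wrong'' minus that of ``$B$ wrong, $A$ correct'', and both of these events are contained in $A\Delta B$. Applying this with $A$ taken to be the sets $S_{1}^{i}\cap S_{2}^{i}$ and $S_{1}^{i}\cup S_{2}^{i}$ associated with $h_{+}^{i}$ and $h_{-}^{i}$, and $B=S_{1}^{*}\cap S_{2}^{*}$, gives $R(h_{+}^{i})-R(S_{1}^{*}\cap S_{2}^{*})\leq d_{\Delta}(S_{1}^{i}\cap S_{2}^{i},S_{1}^{*}\cap S_{2}^{*})$ and $R(h_{-}^{i})-R(S_{1}^{*}\cap S_{2}^{*})\leq d_{\Delta}(S_{1}^{i}\cup S_{2}^{i},S_{1}^{*}\cap S_{2}^{*})$, which are the inequalities needed at the close of the proof.

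The body of the argument then proceeds exactly as before. The transfer of the Tsybakov condition from each $X_{v}$ to the contention region $Q_{i}=S_{1}^{i}\oplus S_{2}^{i}$ is word-for-word the same, since it never mentioned the reference set. For the initialization, Lemma~\ref{lemma2} together with $c_{v}\in\mathcal{H}_{v}$ gives $d(S_{v}^{0},S_{v}^{*})\leq C_{1}/16^{k}$, whence (\ref{Tsycontion2}) yields $d_{\Delta}(S_{v}^{0},S_{v}^{*})\leq 1/16$, and the containment $(A\cap B)\Delta(C\cap D)\subseteq(A\Delta C)\cup(B\Delta D)$ then gives $d_{\Delta}(S_{1}^{0}\cap S_{2}^{0},S_{1}^{*}\cap S_{2}^{*})\leq 1/8$ and, likewise, $d_{\Delta}(S_{1}^{0}\cup S_{2}^{0},S_{1}^{*}\cap S_{2}^{*})\leq 1/8$; for round $i\geq 0$, the $m_{i+1}$ labels drawn from $Q_{i}$ yield, via the same use of Lemma~\ref{lemma2} restricted to $Q_{i}$, the bound $d_{\Delta}(S_{1}^{i+1}\cap S_{2}^{i+1}\mid Q_{i},(S_{1}^{*}\cap S_{2}^{*})\mid Q_{i})\leq 1/8$. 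The heart of the inductive step is set-theoretic and carries over once the reference is fixed: $S_{1}^{i}\cap S_{2}^{i}$ and $\overline{S_{1}^{i}}\cap\overline{S_{2}^{i}}$ are contained in $\overline{Q_{i}}$ by the definition of $\oplus$, so the identity $d_{\Delta}(S_{1}^{i}\cap S_{2}^{i}\mid\overline{Q_{i}},\cdot\mid\overline{Q_{i}})\,Pr(\overline{Q_{i}})=Pr(S_{1}^{i}\cap S_{2}^{i}-(S_{1}^{*}\cap S_{2}^{*}))+Pr(\overline{S_{1}^{i}}\cap\overline{S_{2}^{i}}-\overline{S_{1}^{*}\cap S_{2}^{*}})$ still holds and its right-hand side is precisely what (\ref{eq6}) bounds by $\frac{1}{\alpha}Pr(S_{1}^{i}\oplus S_{2}^{i})$. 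With $\gamma_{i}$ and $\tau_{i+1}$ defined as in Theorem~\ref{theorem1} but with $S_{1}^{*}\cap S_{2}^{*}$ in place of $S^{*}$, the same eight-case dichotomy shows that after round $i+1$ either $d_{\Delta}(S_{1}^{i+1}\cap S_{2}^{i+1},S_{1}^{*}\cap S_{2}^{*})\leq C_{2}\,d_{\Delta}(S_{1}^{i}\cap S_{2}^{i},S_{1}^{*}\cap S_{2}^{*})$ or the analogous bound with $\cup$. Iterating over the chosen $s$ rounds and taking a union bound over the $O(s)$ high-probability events yields $d_{\Delta}(S_{1}^{s}\cap S_{2}^{s},S_{1}^{*}\cap S_{2}^{*})\leq\epsilon$ or $d_{\Delta}(S_{1}^{s}\cup S_{2}^{s},S_{1}^{*}\cap S_{2}^{*})\leq\epsilon$ with probability at least $1-\delta$; combining with the second paragraph gives $R(h_{+}^{s})\leq R(S_{1}^{*}\cap S_{2}^{*})+\epsilon$ or $R(h_{-}^{s})\leq R(S_{1}^{*}\cap S_{2}^{*})+\epsilon$.

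The main obstacle is bookkeeping rather than a new idea: one has to check that every occurrence of $S^{*}$ in the proof of Theorem~\ref{theorem1} can be replaced by $S_{1}^{*}\cap S_{2}^{*}$ without breaking an identity, and the only two places that need a moment's thought are the ones isolated above --- the excess-error-to-$d_{\Delta}$ inequality, which must be proved without assuming the reference set is Bayes-optimal, and the decomposition feeding (\ref{eq6}), which rests only on the containments $S_{1}^{i}\cap S_{2}^{i},\,\overline{S_{1}^{i}}\cap\overline{S_{2}^{i}}\subseteq\overline{Q_{i}}$ and is therefore insensitive to whether $S_{1}^{*}=S_{2}^{*}$. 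Once these are in place, the eight-case estimate and the iteration are literally identical to those in Theorem~\ref{theorem1}.
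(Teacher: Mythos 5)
Your proposal takes essentially the same route as the paper, whose own proof of Theorem~\ref{theorem2} simply says to regard $S_{1}^{*}\cap S_{2}^{*}$ as the optimal set and rerun the proof of Theorem~\ref{theorem1} with $S^{*}$ replaced throughout. You are in fact more explicit than the paper about the one genuinely delicate point --- that $S_{1}^{*}\cap S_{2}^{*}$ is not Bayes-optimal, so (\ref{combonation})--(\ref{combonation2}) must be re-derived from the distribution-free bound $|R(A)-R(B)|\leq Pr(A\Delta B)$ rather than from (\ref{bayes}) --- where the paper merely declares that one may ``neglect the probability mass'' on better hypotheses and treat $S_{1}^{*}\cap S_{2}^{*}$ as optimal.
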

\begin{proof}
Since $S_{v}^{*}$ is the optimal Bayes classifier in the $v$-th view, obviously,
$R(S_{1}^{*}\cap S_{2}^{*})$ is no less than $R(S_{v}^{*})$, $(v=1,2)$. So, learning a
classifier with error rate no larger than $R(S_{1}^{*}\cap S_{2}^{*})+\epsilon$ is not
harder than learning a classifier with error rate no larger than
$R(S_{v}^{*})+\epsilon$. Now we aim at learning a classifier with error rate no larger
than $R(S_{1}^{*}\cap S_{2}^{*})+\epsilon$. Without loss of generality, we assume
$R(S_{v}^{i})>R(S_{1}^{*}\cap S_{2}^{*})$ for $i=0,1,\ldots,s$. If $R(S_{v}^{i})\leq
R(S_{1}^{*}\cap S_{2}^{*})$, we get a classifier with error rate no larger than
$R(S_{1}^{*}\cap S_{2}^{*})+\epsilon$. Thus, we can neglect the probability mass on the
hypothesis whose error rate is less than $R(S_{1}^{*}\cap S_{2}^{*})$ and regard
$S_{1}^{*}\cap S_{2}^{*}$ as the optimal. Replacing $S^{*}$ by $S_{1}^{*}\cap
S_{2}^{*}$ in the discussion of Section 5.1, with the proof of Theorem~\ref{theorem1}
we get Theorem~\ref{theorem2} proved.
\end{proof}

Theorem \ref{theorem2} shows that for the situation where $S_{1}^{*}\neq S_{2}^{*}$, by
requesting $\widetilde{O}(\log\frac{1}{\epsilon})$ labels we can learn two classifiers
$h_{+}^{s}$ and $h_{-}^{s}$, at least one of which is with error rate no larger than
$R(S_{1}^{*}\cap S_{2}^{*})+\epsilon$ with probability at least $1-\delta$. With
Lemma~\ref{theorem1+}, we get Theorem~\ref{theorem2++} from Theorem~\ref{theorem2}.

\begin{theorem}\label{theorem2++}
For data distribution $\mathcal {D}$ $\alpha$-expanding with respect to hypothesis
class $\mathcal {H}_{1} \times \mathcal {H}_{2}$ according to (\ref{eq6}), when the
non-degradation condition holds, if the multi-view classifiers satisfy
$\beta$-condition, by requesting $\widetilde{O}(\log\frac{1}{\epsilon})$ labels the
multi-view active learning in Table 1 will generate a classifier whose error rate is no
larger than $R(S_{1}^{*}\cap S_{2}^{*})+\epsilon$ with probability at least $1-\delta$.
\end{theorem}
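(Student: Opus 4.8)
The plan is to obtain Theorem~\ref{theorem2++} from Theorem~\ref{theorem2} and Lemma~\ref{theorem1+} in precisely the same way that Theorem~\ref{theorem1++} is obtained from Theorem~\ref{theorem1} and Lemma~\ref{theorem1+}; the only change is that the benchmark error rate is now $R(S_1^*\cap S_2^*)$ in place of $R(S^*)$, and this replacement has already been carried out inside Theorem~\ref{theorem2}, so nothing about the $\alpha$-expansion argument needs to be revisited. First I would run the algorithm in Table~1 with the schedule of Theorem~\ref{theorem2} but with confidence parameter $\delta/2$ (i.e. replace $\delta$ by $\delta/2$ in the formula for $m_i$): after $\sum_{i=0}^{s}m_i=\widetilde{O}(\log\frac{1}{\epsilon})$ queried labels we are guaranteed, with probability at least $1-\delta/2$, that at least one of $R(h_+^s)$ and $R(h_-^s)$ is at most $R(S_1^*\cap S_2^*)+\epsilon$.

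The remaining task is to decide which of the two combined classifiers is the reliable one. Here I would use the fact that, by the combination rules in (\ref{eq:1}), $h_+^s$ and $h_-^s$ agree on every instance outside the contention region $S_1^s\oplus S_2^s$ and disagree everywhere inside it (one predicting $0$, the other $1$), so that the difference $R(h_+^s)-R(h_-^s)$ depends only on how the probability mass of $S_1^s\oplus S_2^s$ splits between $y=1$ and $y=0$ --- exactly the quantity compared in the case analysis of Theorem~\ref{theorem1++}. Invoking Lemma~\ref{theorem1+} with confidence $\delta/2$, i.e. querying a further $\frac{2\log(8/\delta)}{\beta^2}$ labels drawn from the contention point set, we correctly identify which of these two masses is the smaller with probability at least $1-\delta/2$, and hence which of $h_+^s,h_-^s$ has the smaller error rate. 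We output that classifier. A union bound over the two failure events (each of probability at most $\delta/2$) shows the output classifier has error rate at most $R(S_1^*\cap S_2^*)+\epsilon$ with probability at least $1-\delta$, and the total label cost is $\widetilde{O}(\log\frac{1}{\epsilon})+\frac{2\log(8/\delta)}{\beta^2}=\widetilde{O}(\log\frac{1}{\epsilon})$.

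I do not expect a genuinely difficult step: the argument is a bookkeeping composition of two facts already established. The only points that deserve attention are (i) checking that Lemma~\ref{theorem1+}, which is stated purely in terms of the contention region and never mentions $S^*$, transfers verbatim to the setting $S_1^*\neq S_2^*$ and is applied to the contention region of the \emph{final} round $s$; (ii) confirming that the $\beta$-condition is exactly what makes $O(\beta^{-2}\log(1/\delta))$ labels suffice for that decision; and (iii) making sure the two confidence budgets compose, which the split $\delta/2+\delta/2$ handles. If one wanted to be fully explicit one would also note, as in the proof of Theorem~\ref{theorem2}, that whenever some $R(S_v^i)$ already drops below $R(S_1^*\cap S_2^*)$ the conclusion is immediate, so it is legitimate to treat $S_1^*\cap S_2^*$ as the optimal set throughout.
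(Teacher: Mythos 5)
Your proposal matches the paper's own proof of Theorem~\ref{theorem2++}: invoke Theorem~\ref{theorem2} with confidence $\delta/2$ to get one good classifier among $h_{+}^{s}$ and $h_{-}^{s}$, use Lemma~\ref{theorem1+} with a further $\frac{2\log(8/\delta)}{\beta^{2}}$ labels to decide which of the two contention-region masses is smaller (hence which of $R(h_{+}^{s})$, $R(h_{-}^{s})$ is smaller), and combine by a union bound with total cost $\widetilde{O}(\log\frac{1}{\epsilon})$. The argument is correct and essentially identical to the paper's.
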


\begin{proof}
According to Theorem~\ref{theorem2}, by requesting
$\widetilde{O}(\log\frac{1}{\epsilon})$ labels the multi-view active learning in Table
1 can get either $R(h_{+}^{s})\leq R(S_{1}^{*}\cap S_{2}^{*})+\epsilon$ or
$R(h_{-}^{s})\leq R(S_{1}^{*}\cap S_{2}^{*})+\epsilon$ with probability at least
$1-\frac{\delta}{2}$. According to Lemma~\ref{theorem1+}, by requesting
$\frac{2\log(\frac{8}{\delta})}{\beta^{2}}$ labels we can decide correctly whether
$Pr\big(\{x: x\in S_{1}^{s} \oplus S_{2}^{s}\wedge y(x)=1\}\big)$ or $Pr\big(\{x: x\in
S_{1}^{s} \oplus S_{2}^{s}\wedge y(x)=0\}\big)$ is smaller with probability at least
$1-\frac{\delta}{2}$.

\textbf{Case 1:} If $Pr\big(\{x: x\in S_{1}^{s} \oplus S_{2}^{s}\wedge
y(x)=1\}\big)\leq Pr\big(\{x: x\in S_{1}^{s} \oplus S_{2}^{s}\wedge y(x)=0\}\big)$, we
have $R(h_{-}^{s})\leq R(h_{+}^{s})$. Thus, we get $R(h_{-}^{s})\leq R(S_{1}^{*}\cap
S_{2}^{*})+\epsilon$ with probability at least $1-\delta$.

\textbf{Case 2:} If $Pr\big(\{x: x\in S_{1}^{s} \oplus S_{2}^{s}\wedge y(x)=1\}\big)>
Pr\big(\{x: x\in S_{1}^{s} \oplus S_{2}^{s}\wedge y(x)=0\}\big)$, we have
$R(h_{+}^{s})< R(h_{-}^{s})$. Thus, we get $R(h_{+}^{s})\leq R(S_{1}^{*}\cap
S_{2}^{*})+\epsilon$ with probability at least $1-\delta$.

The total number of labels to be requested is
$\widetilde{O}(\log\frac{1}{\epsilon})+\frac{2\log(\frac{8}{\delta})}{\beta^{2}}
=\widetilde{O}(\log\frac{1}{\epsilon})$.
\end{proof}

Generally, $R(S_{1}^{*}\cap S_{2}^{*})$ is larger than $R(S_{1}^{*})$ and
$R(S_{2}^{*})$. When $S_{1}^{*}$ is not too much different from $S_{2}^{*}$, i.e.,
$Pr(S_{1}^{*}\oplus S_{2}^{*})\leq\epsilon/2$, we have Corollary \ref{corollary1} which
indicates that the \textit{exponential} improvement in the sample complexity of active
learning with Tsybakov noise is still possible.

\begin{corollary}\label{corollary1}
For data distribution $\mathcal {D}$ $\alpha$-expanding with respect to hypothesis
class $\mathcal {H}_{1} \times \mathcal {H}_{2}$ according to (\ref{eq6}), when the
non-degradation condition holds, if the multi-view classifiers satisfy
$\beta$-condition and $Pr(S_{1}^{*}\oplus S_{2}^{*})\leq\epsilon/2$, by requesting
$\widetilde{O}(\log\frac{1}{\epsilon})$ labels the multi-view active learning in Table
1 will generate a classifier with error rate no larger than $R(S_{v}^{*})+\epsilon$
($v=1,2$) with probability at least $1-\delta$.
\end{corollary}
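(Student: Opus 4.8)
The plan is to derive the corollary from Theorem~\ref{theorem2++} by running the learner at a slightly sharper accuracy and then absorbing the small difference between $R(S_{1}^{*}\cap S_{2}^{*})$ and $R(S_{v}^{*})$. First I would apply Theorem~\ref{theorem2++} with target accuracy $\epsilon/2$ in place of $\epsilon$: the hypotheses there (non-degradation, $\alpha$-expansion with respect to (\ref{eq6}), and the $\beta$-condition on the final classifiers) are exactly those assumed here, so after requesting $\widetilde{O}\big(\log\frac{2}{\epsilon}\big)=\widetilde{O}\big(\log\frac{1}{\epsilon}\big)$ labels the procedure of Table~1 outputs, with probability at least $1-\delta$, a single classifier $h\in\{h_{+}^{s},h_{-}^{s}\}$ with $R(h)\le R(S_{1}^{*}\cap S_{2}^{*})+\epsilon/2$.

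Next I would show $R(S_{1}^{*}\cap S_{2}^{*})\le R(S_{v}^{*})+\epsilon/2$ for $v=1,2$, using only the hypothesis $Pr(S_{1}^{*}\oplus S_{2}^{*})\le\epsilon/2$. The tool is the elementary bound $|R(A)-R(B)|\le Pr(A\,\Delta\,B)$ valid for all $A,B\subseteq X$, which holds because $\textbf{I}(x\in A)$ and $\textbf{I}(x\in B)$ disagree only on $A\,\Delta\,B$, so the error events $\{y\ne\textbf{I}(x\in A)\}$ and $\{y\ne\textbf{I}(x\in B)\}$ have probabilities differing by at most $Pr(A\,\Delta\,B)$. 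Taking $A=S_{1}^{*}\cap S_{2}^{*}$, $B=S_{v}^{*}$ and using $S_{1}^{*}\cap S_{2}^{*}\subseteq S_{v}^{*}$, we get $(S_{1}^{*}\cap S_{2}^{*})\,\Delta\,S_{v}^{*}=S_{v}^{*}-S_{3-v}^{*}\subseteq S_{1}^{*}\oplus S_{2}^{*}$, hence
\begin{eqnarray}
\nonumber
  R(S_{1}^{*}\cap S_{2}^{*})-R(S_{v}^{*})\le Pr\big((S_{1}^{*}\cap S_{2}^{*})\,\Delta\,S_{v}^{*}\big)\le Pr\big(S_{1}^{*}\oplus S_{2}^{*}\big)\le \frac{\epsilon}{2}.
\end{eqnarray}

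Combining the two steps, with probability at least $1-\delta$ the output $h$ satisfies $R(h)\le R(S_{1}^{*}\cap S_{2}^{*})+\epsilon/2\le R(S_{v}^{*})+\epsilon$ for both $v=1$ and $v=2$, which is the assertion of the corollary. I do not anticipate a real obstacle: the argument is essentially a triangle-inequality-style manipulation layered on top of Theorem~\ref{theorem2++}, and the only points requiring care are verifying that invoking that theorem at accuracy $\epsilon/2$ does not inflate the label complexity beyond $\widetilde{O}(\log\frac{1}{\epsilon})$, and that the symmetric-difference region $(S_{1}^{*}\cap S_{2}^{*})\,\Delta\,S_{v}^{*}$ really is contained in the contention region $S_{1}^{*}\oplus S_{2}^{*}$ of the two Bayes classifiers.
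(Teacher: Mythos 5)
Your proposal is correct and follows essentially the same route as the paper's own proof: invoke Theorem~\ref{theorem2++} at accuracy $\epsilon/2$, then bound $R(S_{1}^{*}\cap S_{2}^{*})-R(S_{v}^{*})$ by the probability mass of $(S_{1}^{*}\cap S_{2}^{*})\,\Delta\,S_{v}^{*}\subseteq S_{1}^{*}\oplus S_{2}^{*}$, which the hypothesis caps at $\epsilon/2$. The paper phrases the second step via the excess-error integral $\int_{(S_{1}^{*}\cap S_{2}^{*})\Delta S_{v}^{*}}|2\varphi_{v}(x_{v})-1|p_{x_{v}}d_{x_{v}}\leq Pr(S_{1}^{*}\oplus S_{2}^{*})$, while you use the equivalent elementary bound $|R(A)-R(B)|\leq Pr(A\,\Delta\,B)$; the content is the same.
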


\begin{proof}
According to Theorem~\ref{theorem2++} we know that by requesting
$\widetilde{O}(\log\frac{1}{\epsilon})$ labels the multi-view active learning in Table
1 will generate a classifier whose error rate is no larger than $R(S_{1}^{*}\cap
S_{2}^{*})+\frac{\epsilon}{2}$ with probability at least $1-\delta$. Considering that
\begin{eqnarray*}
  R(S_{1}^{*}\cap S_{2}^{*})-R(S_{v}^{*})=\int_{(S_{1}^{*}\cap S_{2}^{*}) \Delta
  S_{v}^{*}}|2\varphi_{v}(x_{v})-1|p_{x_{v}}d_{x_{v}}
  \leq Pr(S_{1}^{*}\oplus S_{2}^{*}),
\end{eqnarray*}
we have $R(S_{1}^{*}\cap S_{2}^{*})\leq R(S_{v}^{*})+\frac{\epsilon}{2}$. Thus, we get
that $R(S_{1}^{*}\cap S_{2}^{*})+\frac{\epsilon}{2}$ is no larger than
$R(S_{v}^{*})+\epsilon$.
\end{proof}

\section{Multi-view Active Learning without Non-degradation Condition}
Section 5 considers situations when the non-degradation condition holds, there are
cases, however, the non-degradation condition (\ref{eq:2a}) does not hold. In this
section we focus on the multi-view active learning in Table 2 and give an analysis with
the non-degradation condition waived. Firstly, we give Theorem~\ref{theorem3} for the
sample complexity of multi-view active learning in Table 2 when
$S_{1}^{*}=S_{2}^{*}=S^{*}$.

\begin{table*}[t]
\tiny{ \caption{Multi-view active learning without the non-degradation condition}
\centering
\begin{tabular}{p{15.7cm}}
\hline\noalign{\smallskip}
  % after \\: \hline or \cline{col1-col2} \cline{col3-col4} ...
\small{\textbf{Input:} Unlabeled data set $\mathcal {U} = \{x^{1},x^{2},\cdots,\}$
where each example $x^{j}$ is given as a pair $(x_{1}^{j},x_{2}^{j})$}\\
\small{\textbf{Process:}}\\
\ \ \ \ \small{Query the labels of $m_{0}$ instances drawn randomly from $\mathcal {U}$
to compose the labeled data set $\mathcal {L}$;}\\
\ \ \ \ \small{Train the classifier $h_{v}^{0}$ ($v=1,2$) by minimizing the empirical
risk with $\mathcal {L}$ in each view:}\\
\ \ \ \ \ \ \ \ \ \ \small{$h_{v}^{0}=\arg\min_{h\in \mathcal
{H}_{v}}\sum_{(x_{1},x_{2},y)
\in \mathcal {L}}\textbf{I}(h(x_{v})\neq y)$;}\\
\ \ \ \ \small{\textbf{iterate:} $i=1,\cdots,s$}\\
\ \ \ \ \ \ \ \ \small{Apply $h_{1}^{i-1}$ and $h_{2}^{i-1}$ to the unlabeled data set
$\mathcal {U}$ and find out the contention point set $\mathcal {Q}_{i}$;}\\
\ \ \ \ \ \ \ \ \small{Query the labels of $m_{i}$ instances drawn randomly from
$\mathcal {Q}_{i}$, then add them into $\mathcal {L}$ and delete them}\\
\ \ \ \ \ \ \ \ \small{from $\mathcal {U}$;}\\
\ \ \ \ \ \ \ \ \small{Query the labels of $(2^{i}-1)m_{i}$ instances drawn randomly
from $\mathcal {U}-\mathcal {Q}_{i}$, then add them into $\mathcal {L}$ and }\\
\ \ \ \ \ \ \ \ \small{delete them from $\mathcal {U}$;}\\
\ \ \ \ \ \ \ \ \small{Train the classifier $h_{v}^{i}$ by minimizing the empirical
risk with $\mathcal {L}$ in each view:}\\
\ \ \ \ \ \ \ \ \ \ \ \ \ \ \small{$h_{v}^{i}=\arg\min_{h\in \mathcal
{H}_{v}}\sum_{(x_{1},x_{2},y)
\in \mathcal {L}}\textbf{I}(h(x_{v})\neq y)$.}\\
\ \ \ \ \small{\textbf{end iterate}}\\
\small{\textbf{Output:} $h_{+}^{s}$ and $h_{-}^{s}$}\\
\noalign{\smallskip}\hline
\end{tabular}
}
\end{table*}
\begin{theorem}\label{theorem3}
For data distribution $\mathcal {D}$ $\alpha$-expanding with respect to hypothesis
class $\mathcal {H}_{1} \times \mathcal {H}_{2}$ according to Definition
\ref{definition1}, if
$s=\lceil\frac{2\log\frac{1}{8\epsilon}}{\log\frac{1}{C_{2}}}\rceil$ and $m_{i}
=\frac{256^{k}C}{C_{1}^{2}}\big(V+\log(\frac{16(s+1)}{\delta})\big)$, the multi-view
active learning in Table 2 will generate two classifiers $h_{+}^{s}$ and $h_{-}^{s}$,
at least one of which is with error rate no larger than $R(S^{*})+\epsilon$ with
probability at least $1-\delta$. ($V$, $k$, $C_{1}$ and $C_{2}$ are given in
Theorem~\ref{theorem1}.)
\end{theorem}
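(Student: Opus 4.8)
The plan is to re-run the proof of Theorem~\ref{theorem1} essentially line by line, changing only the single step where the non-degradation condition~(\ref{eq:2a}) was used and letting the extra $(2^{i}-1)m_{i}$ labels that Table~2 draws from $\mathcal{U}\setminus\mathcal{Q}_{i}$ play its role. First I would carry over the two preparatory facts of that proof that do not involve any cross-round re-training: (i)~if $X_{v}$ obeys the Tsybakov condition then so does $\mathcal{D}$ restricted to every contention region $Q_{i}=S_{1}^{i-1}\oplus S_{2}^{i-1}$, with the same $C_{0},\lambda$, by the same contradiction argument; and (ii)~the initial $m_{0}=\frac{256^{k}C}{C_{1}^{2}}(V+\log\frac{16(s+1)}{\delta})$ labels, via Lemma~\ref{lemma2} and~(\ref{Tsycontion2}), give $d_{\Delta}(S_{v}^{0},S^{*})\le\frac{1}{16}$ and hence $d_{\Delta}(S_{1}^{0}\cap S_{2}^{0},S^{*})\le\frac18$ and $d_{\Delta}(S_{1}^{0}\cup S_{2}^{0},S^{*})\le\frac18$ on a $1-\frac{\delta}{8(s+1)}$ event.

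The core is the one-round step. Conditioning on everything up to the end of round $i-1$ (so $S_{v}^{i-1}$, hence $Q_{i}$, are deterministic), the $m_{i}$ labels drawn from $\mathcal{Q}_{i}$ control the contention part exactly as in Theorem~\ref{theorem1}: $d_{\Delta}(S_{1}^{i}\cap S_{2}^{i}\mid Q_{i},S^{*}\mid Q_{i})\le\frac18$, and likewise for $\cup$. In Theorem~\ref{theorem1} the condition~(\ref{eq:2a}) was what bounded the error of the re-trained classifiers on the complementary region $\overline{Q_{i}}$; here this is supplied instead by the $(2^{i}-1)m_{i}$ labels sampled from $\mathcal{U}\setminus\mathcal{Q}_{i}$, which -- up to the vanishing mass $Pr(Q_{i})$ of the contention set -- are i.i.d.\ from $\mathcal{D}$ restricted to $\overline{Q_{i}}$, so Lemma~\ref{lemma2} and~(\ref{Tsycontion2}) on $\overline{Q_{i}}$ yield $d_{\Delta}(S_{v}^{i}\mid\overline{Q_{i}},S^{*}\mid\overline{Q_{i}})\le\frac1{16}$ and therefore $d_{\Delta}(S_{1}^{i}\cap S_{2}^{i}\mid\overline{Q_{i}},S^{*}\mid\overline{Q_{i}})\le\frac18$. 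Splitting $d_{\Delta}(S_{1}^{i}\cap S_{2}^{i},S^{*})=d_{\Delta}(\cdot\mid Q_{i},\cdot)Pr(Q_{i})+d_{\Delta}(\cdot\mid\overline{Q_{i}},\cdot)Pr(\overline{Q_{i}})$, expanding the second term with the $\cap/\cup$ identity of Theorem~\ref{theorem1} (which reintroduces $\tau_{i}$ and $\gamma_{i}$), and using Definition~\ref{definition1} in the form $Pr(S_{1}^{i-1}\cap S_{2}^{i-1}-S^{*})+Pr(\overline{S_{1}^{i-1}}\cap\overline{S_{2}^{i-1}}-\overline{S^{*}})\le\frac1\alpha Pr(Q_{i})$, one arrives at precisely the inequalities feeding Cases~1--8 of Theorem~\ref{theorem1}; those eight cases then apply with essentially no change -- the $\frac1{16}$-bound on the non-contention error now coming from the new labels rather than from~(\ref{eq:2a}) -- and give, in every round, either $d_{\Delta}(S_{1}^{i}\cap S_{2}^{i},S^{*})\le C_{2}\,d_{\Delta}(S_{1}^{i-1}\cap S_{2}^{i-1},S^{*})$ or $d_{\Delta}(S_{1}^{i}\cup S_{2}^{i},S^{*})\le C_{2}\,d_{\Delta}(S_{1}^{i-1}\cup S_{2}^{i-1},S^{*})$, with $C_{2}=\frac{5\alpha+8}{6\alpha+8}<1$.

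Chaining this over $s=\lceil\frac{2\log\frac{1}{8\epsilon}}{\log\frac1{C_{2}}}\rceil$ rounds, at least one of $d_{\Delta}(S_{1}^{s}\cap S_{2}^{s},S^{*})$, $d_{\Delta}(S_{1}^{s}\cup S_{2}^{s},S^{*})$ is at most $\frac18 C_{2}^{s/2}\le\epsilon$, so by~(\ref{combonation})--(\ref{combonation2}) either $R(h_{+}^{s})\le R(S^{*})+\epsilon$ or $R(h_{-}^{s})\le R(S^{*})+\epsilon$. A union bound over the at most $2(s+1)$ invocations of Lemma~\ref{lemma2} (the contention and the non-contention sample of each of the $s+1$ batches), each at confidence $1-\frac{\delta}{16(s+1)}$, keeps the total failure probability under $\delta$ -- which is the reason $m_{i}$ carries the term $\log\frac{16(s+1)}{\delta}$.

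The step I expect to be the real obstacle is exactly the substitute for~(\ref{eq:2a}): certifying that the $(2^{i}-1)m_{i}$ samples from $\mathcal{U}\setminus\mathcal{Q}_{i}$ control the re-trained classifiers on $\overline{Q_{i}}$. Two points need care. First, $S_{v}^{i}$ minimises empirical risk over the \emph{whole} accumulated set $\mathcal{L}$, which is a union of batches drawn from many history-dependent restricted distributions rather than an i.i.d.\ sample from one distribution; one gets around this by conditioning on the history so each batch is i.i.d., observing that $S^{*}$ is pointwise Bayes-optimal and hence optimal on every restriction of $\mathcal{D}$, so the excess risk of $S_{v}^{i}$ under the mixture splits into non-negative per-batch pieces and the $\overline{Q_{i}}$ piece is governed by the $(2^{i}-1)m_{i}$ labels assigned to it. Second, $\mathcal{U}\setminus\mathcal{Q}_{i}$ agrees with $\mathcal{D}$ restricted to $\overline{Q_{i}}$ only up to the mass $Pr(Q_{i})$, so one needs the bookkeeping that $Pr(Q_{i})$ stays small -- it is at most $d_{\Delta}(S_{1}^{i-1}\cap S_{2}^{i-1},S^{*})+d_{\Delta}(S_{1}^{i-1}\cup S_{2}^{i-1},S^{*})$, and as soon as $Pr(Q_{i})\le\epsilon/(1+\frac{1}{\alpha})$ the proof is already finished because $d_{\Delta}(S_{1}^{i-1}\cap S_{2}^{i-1},S^{*})\le(1+\frac{1}{\alpha})Pr(Q_{i})\le\epsilon$, so one may assume $Pr(Q_{i})$ bounded below by $\epsilon/(1+\frac{1}{\alpha})$ throughout. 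Making the absolute constants from these two points fit inside the stated $m_{i}$ without enlarging it is the delicate part.
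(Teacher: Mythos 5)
Your overall strategy --- rerun the proof of Theorem~\ref{theorem1} and let the extra $(2^{i}-1)m_{i}$ labels drawn from $\mathcal{U}\setminus\mathcal{Q}_{i}$ stand in for the non-degradation condition --- is the paper's strategy, but your implementation of the substitution has a genuine gap. You replace (\ref{eq:2a}) by an \emph{absolute} bound $d_{\Delta}(S_{v}^{i}\mid\overline{Q_{i}},S^{*}\mid\overline{Q_{i}})\le \frac{1}{16}$ obtained from Lemma~\ref{lemma2} on the new batch. Cases 1--8 of Theorem~\ref{theorem1} do not run on an absolute bound: they need the \emph{relative} inequality $d_{\Delta}(S_{v}^{i+1}\mid\overline{Q_{i}},S^{*}\mid\overline{Q_{i}})\le d_{\Delta}(S_{v}^{i}\mid\overline{Q_{i}},S^{*}\mid\overline{Q_{i}})$, because the quantity that appears in both numerator and denominator of the contraction ratio is the \emph{old} classifier's agreed-region error $Pr(S_{1}^{i}\cap S_{2}^{i}-S^{*})+Pr(\overline{S_{1}^{i}}\cap\overline{S_{2}^{i}}-\overline{S^{*}})$, which the $\alpha$-expansion condition converts into $\frac{1}{\alpha}Pr(S_{1}^{i}\oplus S_{2}^{i})$ so that every term is a multiple of $Pr(S_{1}^{i}\oplus S_{2}^{i})$ and the ratio is a constant $C_{2}<1$ (the bound $\frac{A+x}{B+x}$ is monotone in the common $x$, which is why new and old must share that term). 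With your absolute bound the non-contention contribution to $d_{\Delta}(S_{1}^{i+1}\cap S_{2}^{i+1},S^{*})$ is $\frac{1}{16}Pr(\overline{Q_{i}})$, a fixed constant, while the quantity you are contracting is of order $(1+\frac{1}{\alpha})Pr(Q_{i})$ and must fall to $\epsilon$; once it drops below $\frac{1}{16}$ the ratio exceeds one and the recursion dies. Letting the absolute bound decay like $(2^{i})^{-1/(2k)}$ via the growing batch size does not rescue this in the regime the theorem is about: for small $\lambda$ (large $k$) one has $2^{-1/k}>C_{2}=\frac{5\alpha+8}{6\alpha+8}$, so the decay is too slow.

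What the paper actually does is derive (\ref{eq:2a}) itself from the extra labels: after round $i$ the entire training set contains $(2^{i+1}-1)m_{i}$ examples, and in the next round exactly that many fresh labels are drawn from $\overline{Q_{i}}$, so $S_{v}^{i+1}$ is trained on at least as many examples from $\overline{Q_{i}}$ as $S_{v}^{i}$ saw in total; since $c_{v}\in\mathcal{H}_{v}$, the paper invokes the standard PAC model to conclude $d(S_{v}^{i+1}\mid\overline{Q_{i}},S^{*}\mid\overline{Q_{i}})\le d(S_{v}^{i}\mid\overline{Q_{i}},S^{*}\mid\overline{Q_{i}})$ for any $\varphi_{v}$, specializes $\varphi_{v}\in\{0,1\}$ to obtain the $d_{\Delta}$ version, i.e., the non-degradation condition, and then cites the proof of Theorem~\ref{theorem1} verbatim. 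To repair your argument you should establish this round-to-round monotonicity on $\overline{Q_{i}}$ (or some other relative comparison) rather than the absolute $\frac{1}{16}$ bound; your closing remarks about history-dependent batches and the lower bound on $Pr(Q_{i})$ concern a secondary difficulty, not this one.
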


\begin{proof}
After the $i$-th round in Table 2, the number of training examples in $\mathcal {L}$ is
$\sum_{b=0}^{i}2^{b}m_{i}=(2^{i+1}-1)m_{i}$. While in the $(i+1)$-th round, we randomly
query $(2^{i+1}-1)m_{i}$ labels from the region of $\overline{Q_{i}}$ and add them into
$\mathcal {L}$. So in the $(i+1)$-th round, the number of training examples for
$S_{v}^{i+1}$ $(v=1,2)$ drawn randomly from region of $\overline{Q_{i}}$ is larger than
the number of whole training examples for $S_{v}^{i}$. Since the optimal Bayes
classifier $c_{v}$ belongs to $\mathcal {H}_{v}$, according to the standard PAC-model,
it is easy to know that $d(S_{v}^{i+1}|\overline{Q_{i}},S^{*}|\overline{Q_{i}})\leq
d(S_{v}^{i}|\overline{Q_{i}},S^{*}|\overline{Q_{i}})$ can be met for any $\varphi_{v}$,
where $d(S_{v}|\overline{Q_{i}},S^{*}|\overline{Q_{i}})$ is defined as
\begin{eqnarray}
\nonumber
  d(S_{v}|\overline{Q_{i}},S^{*}|\overline{Q_{i}}) \triangleq R(S_{v}|\overline{Q_{i}})-R(S^{*}|\overline{Q_{i}})
  =\int_{(S_{v}\cap\overline{Q_{i}}) \Delta
  (S^{*}\cap\overline{Q_{i}})}|2\varphi_{v}(x_{v})-1|p_{x_{v}}d_{x_{v}}\big/Pr(\overline{Q_{i}}).
\end{eqnarray}
So, by setting $\varphi_{v}\in\{0,1\}$, we get
$d_{\Delta}(S_{v}^{i+1}|\overline{Q_{i}},S^{*}|\overline{Q_{i}})\leq
d_{\Delta}(S_{v}^{i}|\overline{Q_{i}},S^{*}|\overline{Q_{i}})$, which implies the
non-degradation condition. Thus, with the proof of Theorem~\ref{theorem1}, we get
Theorem\ref{theorem3} proved.
\end{proof}

Theorem~\ref{theorem3} shows that we can request $\sum_{i=0}^{s}2^{i}m_{i}
=\widetilde{O}(\frac{1}{\epsilon})$ labels to learn two classifiers $h_{+}^{s}$ and
$h_{-}^{s}$, at least one of which is with error rate no larger than
$R(S^{*})+\epsilon$ with probability at least $1-\delta$. To guarantee the
non-degradation condition (\ref{eq:2a}), we only need to query $(2^{i}-1)m_{i}$ more
labels in the $i$-th round. With Lemma~\ref{theorem1+}, we get
Theorem~\ref{theorem3++}.\vspace{-1mm}

\begin{theorem}\label{theorem3++}
For data distribution $\mathcal {D}$ $\alpha$-expanding with respect to hypothesis
class $\mathcal {H}_{1} \times \mathcal {H}_{2}$ according to Definition
\ref{definition1}, if the multi-view classifiers satisfy $\beta$-condition, by
requesting $\widetilde{O}(\frac{1}{\epsilon})$ labels the multi-view active learning in
Table 2 will generate a classifier whose error rate is no larger than
$R(S^{*})+\epsilon$ with probability at least $1-\delta$.
\end{theorem}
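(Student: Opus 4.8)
The plan is to follow the template of the proof of Theorem~\ref{theorem1++}, with Theorem~\ref{theorem3} playing the role that Theorem~\ref{theorem1} played there. First I would run the procedure of Table 2 with confidence parameter $\delta/2$ and with $s$ and $m_{i}$ as prescribed in Theorem~\ref{theorem3}. By that theorem this costs $\sum_{i=0}^{s}2^{i}m_{i}=\widetilde{O}(\frac{1}{\epsilon})$ queries and yields $h_{+}^{s}$ and $h_{-}^{s}$, at least one of which has error rate at most $R(S^{*})+\epsilon$, with probability at least $1-\delta/2$. What remains is to decide which of the two to return, using only $\widetilde{O}(\frac{1}{\epsilon})$ further labels.

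For the selection step I would invoke Lemma~\ref{theorem1+}: since $S_{1}^{s}$ and $S_{2}^{s}$ satisfy $\beta$-condition, an additional $\frac{2\log(8/\delta)}{\beta^{2}}$ labels drawn from the contention region $S_{1}^{s}\oplus S_{2}^{s}$ determine correctly, with probability at least $1-\delta/2$, which of $Pr(\{x:x\in S_{1}^{s}\oplus S_{2}^{s}\wedge y(x)=1\})$ and $Pr(\{x:x\in S_{1}^{s}\oplus S_{2}^{s}\wedge y(x)=0\})$ is the smaller. The point linking this comparison to the choice of output is that $h_{+}^{s}$ and $h_{-}^{s}$ agree on every instance outside $S_{1}^{s}\oplus S_{2}^{s}$, whereas on $S_{1}^{s}\oplus S_{2}^{s}$ one predicts $0$ and the other predicts $1$; decomposing the risk over these two regions shows that $R(h_{+}^{s})-R(h_{-}^{s})$ equals the difference of the two contention-region label masses, so the smaller mass identifies the combined classifier with the smaller error rate, as in the case analysis in the proof of Theorem~\ref{theorem1++}. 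I would then return that classifier; on the good event of the first step its error rate is at most $R(S^{*})+\epsilon$.

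A union bound over the two failure events --- Theorem~\ref{theorem3} failing, and Lemma~\ref{theorem1+} mis-ranking the two masses --- each of probability at most $\delta/2$, gives the claim with probability at least $1-\delta$, and the total label cost is $\widetilde{O}(\frac{1}{\epsilon})+\frac{2\log(8/\delta)}{\beta^{2}}=\widetilde{O}(\frac{1}{\epsilon})$.

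I do not anticipate a genuine obstacle. The substantive work has already been done in Theorem~\ref{theorem3}, whose proof shows that the extra $(2^{i}-1)m_{i}$ labels queried outside the contention set in the $i$-th round of Table 2 force the non-degradation inequality (\ref{eq:2a}) to hold automatically, so that the contraction argument behind Theorem~\ref{theorem1} carries over. The only step in the present argument requiring a little care is the identity expressing $R(h_{+}^{s})-R(h_{-}^{s})$ as the difference of the two contention-region label masses, which I would obtain by splitting the expected $0/1$ loss over the agreement region $\overline{S_{1}^{s}\oplus S_{2}^{s}}$ (where $h_{+}^{s}$ and $h_{-}^{s}$, and hence their loss contributions, coincide) and the contention region $S_{1}^{s}\oplus S_{2}^{s}$ (where $h_{+}^{s}\equiv 0$ contributes the positive-label mass and $h_{-}^{s}\equiv 1$ contributes the negative-label mass).
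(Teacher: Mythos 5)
Your proposal is correct and follows essentially the same route as the paper: invoke Theorem~\ref{theorem3} to get the pair $h_{+}^{s}$, $h_{-}^{s}$ with confidence $1-\delta/2$, then use Lemma~\ref{theorem1+} with $\frac{2\log(8/\delta)}{\beta^{2}}$ extra labels to select between them exactly as in the proof of Theorem~\ref{theorem1++}. Your explicit decomposition of $R(h_{+}^{s})-R(h_{-}^{s})$ over the contention region is the same fact the paper uses implicitly in its Case 1/Case 2 analysis, just spelled out more carefully.
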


\begin{proof}
According to Theorem~\ref{theorem3}, by requesting $\widetilde{O}(\frac{1}{\epsilon})$
labels the multi-view active learning in Table 2 will generate two classifiers
$h_{+}^{s}$ and $h_{-}^{s}$, at least one of which is with error rate no larger than
$R(S^{*})+\epsilon$ with probability at least $1-\delta$. Similarly to the proof of
Theorem~\ref{theorem1++}, we get Theorem~\ref{theorem3++} proved.
\end{proof}

Theorem~\ref{theorem3++} shows that, without the non-degradation condition, we need to
request $\widetilde{O}(\frac{1}{\epsilon})$ labels to learn a classifier with error
rate no larger than $R(S^{*})+\epsilon$ with probability at least $1-\delta$. The order
of $1/\epsilon$ is independent of the parameter in Tsybakov noise. Similarly to
Theorem~\ref{theorem1+++}, we get Theorem~\ref{theorem3+++} which indicates that both
$h_{+}^{s}$ and $h_{-}^{s}$ are good approximations of the optimal classifier.

\begin{theorem}\label{theorem3+++}
For data distribution $\mathcal {D}$ $\alpha$-expanding with respect to hypothesis
class $\mathcal {H}_{1} \times \mathcal {H}_{2}$ according to Definition
\ref{definition1}, if (\ref{condition3}) holds, by requesting
$\widetilde{O}(\frac{1}{\epsilon})$ labels the multi-view active learning in Table 2
will generate two classifiers $h_{+}^{s}$ and $h_{-}^{s}$ which satisfy either (a) or
(b) with probability at least $1-\delta$. (a) $R(h_{+}^{s})\leq R(S^{*})+\epsilon$ and
$R(h_{-}^{s})\leq R(S^{*})+O(\epsilon)$; (b) $R(h_{+}^{s})\leq R(S^{*})+O(\epsilon)$
and $R(h_{-}^{s})\leq R(S^{*})+\epsilon$.
\end{theorem}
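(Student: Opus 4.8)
The plan is to mirror the argument used for Theorem~\ref{theorem1+++}, substituting the sample-complexity guarantee of Theorem~\ref{theorem3} for that of Theorem~\ref{theorem1}. First I would unpack condition (\ref{condition3}): since $Pr(S_{1}^{s}\oplus S_{2}^{s})\leq 1$, multiplying both sides of (\ref{condition3}) by $Pr(S_{1}^{s}\oplus S_{2}^{s})$ gives $|Pr(\{x: x\in S_{1}^{s}\oplus S_{2}^{s}\wedge y(x)=1\})-Pr(\{x: x\in S_{1}^{s}\oplus S_{2}^{s}\wedge y(x)=0\})|=O(\epsilon)$.

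Next I would relate $R(h_{+}^{s})$ and $R(h_{-}^{s})$ through the contention region. By (\ref{eq:1}), $h_{+}^{s}$ corresponds to the set $S_{1}^{s}\cap S_{2}^{s}$ and $h_{-}^{s}$ to $S_{1}^{s}\cup S_{2}^{s}$, and these two sets differ exactly on $S_{1}^{s}\oplus S_{2}^{s}$, where $h_{-}^{s}$ predicts $1$ while $h_{+}^{s}$ predicts $0$; elsewhere the two combined classifiers agree. Hence $R(h_{-}^{s})-R(h_{+}^{s})=Pr(\{x\in S_{1}^{s}\oplus S_{2}^{s}: y(x)=0\})-Pr(\{x\in S_{1}^{s}\oplus S_{2}^{s}: y(x)=1\})$, so the bound from the previous step yields $|R(h_{+}^{s})-R(h_{-}^{s})|=O(\epsilon)$.

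Then I would invoke Theorem~\ref{theorem3}: with $s$ and the $m_{i}$ as specified there, by requesting $\sum_{i=0}^{s}2^{i}m_{i}=\widetilde{O}(\frac{1}{\epsilon})$ labels the algorithm in Table~2 produces $h_{+}^{s}$ and $h_{-}^{s}$ such that, with probability at least $1-\delta$, either $R(h_{+}^{s})\leq R(S^{*})+\epsilon$ or $R(h_{-}^{s})\leq R(S^{*})+\epsilon$. Combining with the previous paragraph: if the first alternative holds then $R(h_{-}^{s})\leq R(h_{+}^{s})+O(\epsilon)\leq R(S^{*})+\epsilon+O(\epsilon)=R(S^{*})+O(\epsilon)$, which is case~(a); symmetrically the second alternative gives case~(b). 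This establishes the claim.

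The only non-trivial ingredient here is Theorem~\ref{theorem3} itself, which in turn rests on the two-view $\alpha$-expansion machinery of Theorem~\ref{theorem1} together with the observation that the extra $(2^{i}-1)m_{i}$ queries drawn from $\mathcal{U}-\mathcal{Q}_{i}$ make the non-degradation condition (\ref{eq:2a}) hold automatically; given that result, the remaining steps are elementary. The main point to check is simply that no labels beyond the $\widetilde{O}(\frac{1}{\epsilon})$ of Theorem~\ref{theorem3} are required, which holds because --- unlike in Theorem~\ref{theorem3++} --- under (\ref{condition3}) we never need to decide which of $h_{+}^{s}$, $h_{-}^{s}$ is the better classifier.
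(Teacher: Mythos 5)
Your proposal is correct and follows essentially the same route as the paper, which simply combines Theorem~\ref{theorem3} with the argument of Theorem~\ref{theorem1+++}; your explicit computation that $R(h_{-}^{s})-R(h_{+}^{s})$ equals the difference of the two class-conditional masses on $S_{1}^{s}\oplus S_{2}^{s}$ just spells out the step the paper dismisses as ``easy to get.'' No gaps.
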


\begin{proof}
According to Theorem~\ref{theorem3}, by requesting $\widetilde{O}(\frac{1}{\epsilon})$
labels the multi-view active learning in Table 2 will generate two classifiers
$h_{+}^{s}$ and $h_{-}^{s}$, at least one of which is with error rate no larger than
$R(S^{*})+\epsilon$ with probability at least $1-\delta$. Similarly to the proof of
Theorem~\ref{theorem1+++}, we get Theorem~\ref{theorem3+++} proved.
\end{proof}

As for the situation where $S_{1}^{*}\neq S_{2}^{*}$, similarly to
Theorem~\ref{theorem2++} and Corollary~\ref{corollary1}, we have
Theorem~\ref{theorem4++} and Corollary~\ref{corollary2}.\vspace{-1mm}

\begin{theorem}\label{theorem4++}
For data distribution $\mathcal {D}$ $\alpha$-expanding with respect to hypothesis
class $\mathcal {H}_{1} \times \mathcal {H}_{2}$ according to (\ref{eq6}), if the
multi-view classifiers satisfy $\beta$-condition, by requesting
$\widetilde{O}(\frac{1}{\epsilon})$ labels the multi-view active learning in Table 2
will generate a classifier whose error rate is no larger than $R(S_{1}^{*}\cap
S_{2}^{*})+\epsilon$ with probability at least $1-\delta$.
\end{theorem}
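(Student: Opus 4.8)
The plan is to run the same two-stage argument used for Theorem~\ref{theorem2++}, with Table~1 replaced by Table~2 so the non-degradation assumption can be dropped, and with $S^{*}$ everywhere replaced by the surrogate optimum $S_{1}^{*}\cap S_{2}^{*}$. \textbf{Stage one: at least one of $h_{+}^{s},h_{-}^{s}$ is good.} I would first establish the Table~2 analogue of Theorem~\ref{theorem2}, namely that under (\ref{eq6}), requesting $\widetilde{O}(\frac{1}{\epsilon})$ labels the procedure in Table~2 outputs $h_{+}^{s}$ and $h_{-}^{s}$, at least one of which has error rate at most $R(S_{1}^{*}\cap S_{2}^{*})+\epsilon$ with probability at least $1-\frac{\delta}{2}$. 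This is assembled from two ingredients already in the paper. From the proof of Theorem~\ref{theorem2} I borrow the reduction: assume $R(S_{v}^{i})>R(S_{1}^{*}\cap S_{2}^{*})$ in every round (otherwise a good classifier is already available), discard the probability mass carried by hypotheses whose error beats $R(S_{1}^{*}\cap S_{2}^{*})$, and treat $S_{1}^{*}\cap S_{2}^{*}$ as the effective optimum, which puts us in the Section~5.1 picture with $S^{*}$ read as $S_{1}^{*}\cap S_{2}^{*}$. From the proof of Theorem~\ref{theorem3} I borrow the observation that, in round $i+1$, the extra labels drawn from $\mathcal{U}-\mathcal{Q}_{i}$ make the number of examples used to fit $S_{v}^{i+1}$ on $\overline{Q_{i}}$ exceed the whole training set of $S_{v}^{i}$, so a standard PAC comparison forces $d_{\Delta}(S_{v}^{i+1}|\overline{Q_{i}},S_{v}^{*}|\overline{Q_{i}})\leq d_{\Delta}(S_{v}^{i}|\overline{Q_{i}},S_{v}^{*}|\overline{Q_{i}})$, i.e.\ the non-degradation condition (\ref{eq:2a}) holds automatically. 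Feeding this into the eight-case contraction argument of Theorem~\ref{theorem1} yields, with probability at least $1-\frac{\delta}{2}$, either $d_{\Delta}(S_{1}^{s}\cap S_{2}^{s},S_{1}^{*}\cap S_{2}^{*})\leq\epsilon$ or $d_{\Delta}(S_{1}^{s}\cup S_{2}^{s},S_{1}^{*}\cap S_{2}^{*})\leq\epsilon$, which by (\ref{combonation})--(\ref{combonation2}) bounds $R(h_{+}^{s})$ or $R(h_{-}^{s})$ by $R(S_{1}^{*}\cap S_{2}^{*})+\epsilon$, at a cost of $\widetilde{O}(\frac{1}{\epsilon})$ labels.

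\textbf{Stage two: selecting the good classifier.} I would then invoke Lemma~\ref{theorem1+}: since $S_{1}^{s},S_{2}^{s}$ satisfy the $\beta$-condition, an additional $\frac{2\log(8/\delta)}{\beta^{2}}$ labels decide correctly, with probability at least $1-\frac{\delta}{2}$, which of $Pr(\{x:x\in S_{1}^{s}\oplus S_{2}^{s}\wedge y(x)=1\})$ and $Pr(\{x:x\in S_{1}^{s}\oplus S_{2}^{s}\wedge y(x)=0\})$ is the smaller. Outside the contention region $h_{+}^{s}$ and $h_{-}^{s}$ agree, while on it $h_{+}^{s}$ predicts $0$ and $h_{-}^{s}$ predicts $1$, so $R(h_{+}^{s})-R(h_{-}^{s})$ equals the positive mass minus the negative mass on that region. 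Hence if the positive mass is no larger we have $R(h_{-}^{s})\leq R(h_{+}^{s})$ and we keep $h_{-}^{s}$, otherwise we keep $h_{+}^{s}$; in either case the retained classifier is the one with smaller error, and since at least one of the two has error at most $R(S_{1}^{*}\cap S_{2}^{*})+\epsilon$, the retained one does. A union bound over the two $1-\frac{\delta}{2}$ events then gives a classifier with error rate at most $R(S_{1}^{*}\cap S_{2}^{*})+\epsilon$ with probability at least $1-\delta$, using $\widetilde{O}(\frac{1}{\epsilon})+\frac{2\log(8/\delta)}{\beta^{2}}=\widetilde{O}(\frac{1}{\epsilon})$ labels in total.

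The routine parts --- the two-way case split for choosing between $h_{+}^{s}$ and $h_{-}^{s}$, and the accounting of label counts and failure probabilities --- are essentially copied from the proofs of Theorems~\ref{theorem2++} and~\ref{theorem3}. The point that needs genuine care, and which I expect to be the main obstacle, is the transfer inside Stage one: the PAC monotonicity coming from the extra $\overline{Q_{i}}$-queries is naturally stated relative to the per-view Bayes set $S_{v}^{*}$, which lies in $\mathcal{H}_{v}$, whereas the contraction argument of Theorem~\ref{theorem1} needs the non-degradation condition (\ref{eq:2a}) relative to the surrogate optimum $S_{1}^{*}\cap S_{2}^{*}$, which need not belong to $\mathcal{H}_{v}$. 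The way around this is precisely the reduction of Theorem~\ref{theorem2}: once we restrict to the regime $R(S_{v}^{i})>R(S_{1}^{*}\cap S_{2}^{*})$ and neglect the probability mass on the better hypotheses, $S_{1}^{*}\cap S_{2}^{*}$ takes over the role of $S^{*}$ in (\ref{eq:2a}), and the excess error of $S_{v}^{i}$ on $\overline{Q_{i}}$ measured against $S_{1}^{*}\cap S_{2}^{*}$ differs from the one measured against $S_{v}^{*}$ only by a term not depending on $S_{v}^{i}$, so the PAC-derived monotonicity carries over, delivering the non-degradation condition in exactly the form the eight cases of Theorem~\ref{theorem1} consume.
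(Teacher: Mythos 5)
Your proposal matches the paper's proof, which likewise just combines the reduction of Theorem~\ref{theorem2} (replacing $S^{*}$ by $S_{1}^{*}\cap S_{2}^{*}$) with the forced non-degradation argument of Theorem~\ref{theorem3} and then applies Lemma~\ref{theorem1+} with a $\frac{\delta}{2}$--$\frac{\delta}{2}$ union bound to pick between $h_{+}^{s}$ and $h_{-}^{s}$. If anything, you are more explicit than the paper, which states this composition in two sentences and never addresses the subtlety you flag about measuring non-degradation against the surrogate optimum $S_{1}^{*}\cap S_{2}^{*}$ rather than $S_{v}^{*}$.
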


\begin{proof}
Similarly to the proof of Theorem~\ref{theorem2} and Theorem~\ref{theorem3}, we know
that by requesting $\widetilde{O}(\frac{1}{\epsilon})$ labels the multi-view active
learning in Table 2 can get either $R(h_{+}^{s})\leq R(S_{1}^{*}\cap
S_{2}^{*})+\epsilon$ or $R(h_{-}^{s})\leq R(S_{1}^{*}\cap S_{2}^{*})+\epsilon$ with
probability at least $1-\frac{\delta}{2}$. According to Lemma~\ref{theorem1+}, by
requesting $\frac{2\log(\frac{8}{\delta})}{\beta^{2}}$ labels we can decide correctly
whether $R(h_{+}^{s})$ or $R(h_{-}^{s})$ is smaller with probability at least
$1-\frac{\delta}{2}$. Thus, we can get a classifiers whose error rate is no larger than
$R(S_{1}^{*}\cap S_{2}^{*})+\epsilon$ with probability at least $1-\delta$. The total
number of labels to be requested is
$\widetilde{O}(\frac{1}{\epsilon})+\frac{2\log(\frac{8}{\delta})}{\beta^{2}}
=\widetilde{O}(\frac{1}{\epsilon})$.
\end{proof}

\begin{corollary}\label{corollary2}
For data distribution $\mathcal {D}$ $\alpha$-expanding with respect to hypothesis
class $\mathcal {H}_{1} \times \mathcal {H}_{2}$ according to (\ref{eq6}), if the
multi-view classifiers satisfy $\beta$-condition and $Pr(S_{1}^{*}\oplus
S_{2}^{*})\leq\epsilon/2$, by requesting $\widetilde{O}(\frac{1}{\epsilon})$ labels the
multi-view active learning in Table 2 will generate a classifier with error rate no
larger than $R(S_{v}^{*})+\epsilon$ ($v=1,2$) with probability at least $1-\delta$.
\end{corollary}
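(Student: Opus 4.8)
The plan is to mirror the proof of Corollary~\ref{corollary1}, only replacing the appeal to Theorem~\ref{theorem2++} by an appeal to Theorem~\ref{theorem4++}: the latter is precisely the Table~2 (non-degradation-free) analogue of Theorem~\ref{theorem2++}, and it carries the same $\widetilde{O}(\frac{1}{\epsilon})$ label cost, so the reduction goes through verbatim.

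First I would invoke Theorem~\ref{theorem4++} with target accuracy $\epsilon/2$ in place of $\epsilon$. Under the stated hypotheses ($\alpha$-expansion with respect to $\mathcal{H}_1 \times \mathcal{H}_2$ according to (\ref{eq6}) and the $\beta$-condition on $S_1^s, S_2^s$), requesting $\widetilde{O}(\frac{1}{\epsilon})$ labels — note that $\widetilde{O}(\frac{2}{\epsilon}) = \widetilde{O}(\frac{1}{\epsilon})$, so changing $\epsilon$ to $\epsilon/2$ does not affect the order — the multi-view active learning in Table~2 produces a classifier whose error rate is at most $R(S_1^*\cap S_2^*) + \frac{\epsilon}{2}$ with probability at least $1-\delta$.

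Next I would bound the gap $R(S_1^*\cap S_2^*) - R(S_v^*)$ deterministically, exactly as in Corollary~\ref{corollary1}. By the excess-error identity (\ref{bayes}), $R(S_1^*\cap S_2^*) - R(S_v^*) = \int_{(S_1^*\cap S_2^*)\Delta S_v^*} |2\varphi_v(x_v)-1|\, p_{x_v} d_{x_v}$. Since $(S_1^*\cap S_2^*)\Delta S_v^* \subseteq S_1^*\oplus S_2^*$ (for $v=1$ this set equals $S_1^* - S_2^*$, for $v=2$ it equals $S_2^* - S_1^*$) and $|2\varphi_v-1|\le 1$, this integral is at most $Pr(S_1^*\oplus S_2^*) \le \epsilon/2$ by hypothesis. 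Hence $R(S_1^*\cap S_2^*) + \frac{\epsilon}{2} \le R(S_v^*) + \epsilon$ for $v=1,2$, so the classifier obtained in the previous step has error rate at most $R(S_v^*) + \epsilon$ with probability at least $1-\delta$, which is the claim.

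There is no genuinely hard step here: all the content is in Theorem~\ref{theorem4++} (which itself rests on Theorem~\ref{theorem3} and hence on the argument of Theorem~\ref{theorem1}), and the only things to verify by hand are the set inclusion $(S_1^*\cap S_2^*)\Delta S_v^* \subseteq S_1^*\oplus S_2^*$ and the trivial bound $|2\varphi_v-1|\le 1$. The single cosmetic point worth stating explicitly is that re-running Theorem~\ref{theorem4++} at accuracy $\epsilon/2$ leaves the label complexity at $\widetilde{O}(\frac{1}{\epsilon})$, so the final bound is unchanged.
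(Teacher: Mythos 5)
Your proposal is correct and follows essentially the same route as the paper: the paper's proof of Corollary~2 likewise invokes Theorem~4 (the Table~2 analogue) at accuracy $\epsilon/2$ and then reuses the bound $R(S_1^*\cap S_2^*)-R(S_v^*)\le Pr(S_1^*\oplus S_2^*)\le\epsilon/2$ from the proof of Corollary~1. Your explicit verification of the inclusion $(S_1^*\cap S_2^*)\Delta S_v^*\subseteq S_1^*\oplus S_2^*$ is a detail the paper leaves implicit, but the argument is the same.
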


\begin{proof}
According to Theorem~\ref{theorem4++} we know that by requesting
$\widetilde{O}(\frac{1}{\epsilon})$ labels the multi-view active learning in Table 2
will generate a classifier whose error rate is no larger than $R(S_{1}^{*}\cap
S_{2}^{*})+\frac{\epsilon}{2}$ with probability at least $1-\delta$. With the proof of
Corollary~\ref{corollary1}, we get that $R(S_{1}^{*}\cap S_{2}^{*})+\frac{\epsilon}{2}$
is no larger than $R(S_{v}^{*})+\epsilon$.
\end{proof}

\section{Empirical Verification}
In this section we empirically verify that whether multi-view setting can improve the
sample complexity of active learning in the non-realizable case remarkably.
\begin{figure}[!ht]
\centering
\begin{minipage}[c]{0.4\linewidth}
\centering
\includegraphics[width =1 \linewidth]{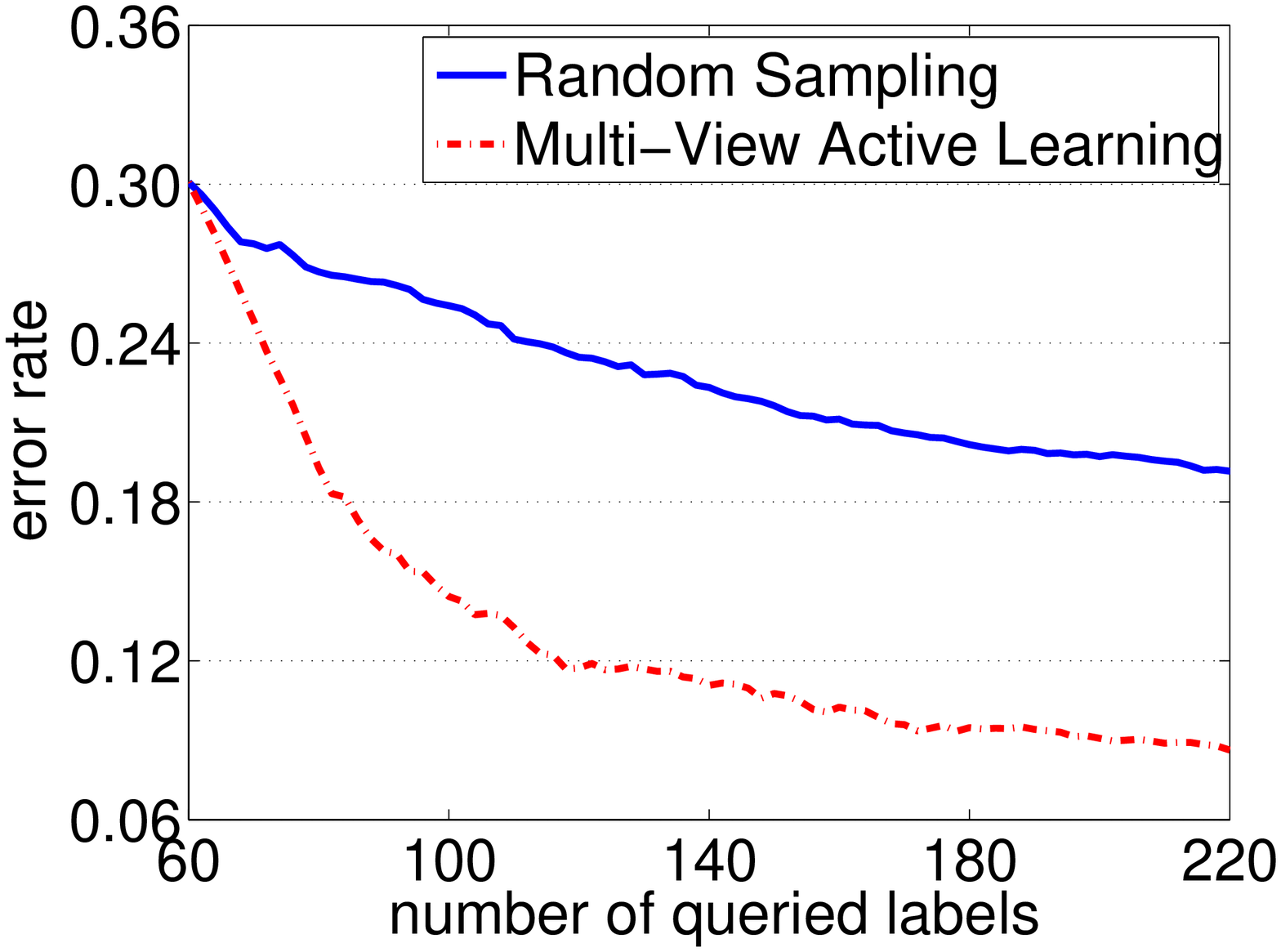}
\centering \mbox{\footnotesize (a) \textit{semi-artificial with 1 cluster}}
\end{minipage}
\centering
\begin{minipage}[c]{0.4\linewidth}
\centering
\includegraphics[width =1 \linewidth]{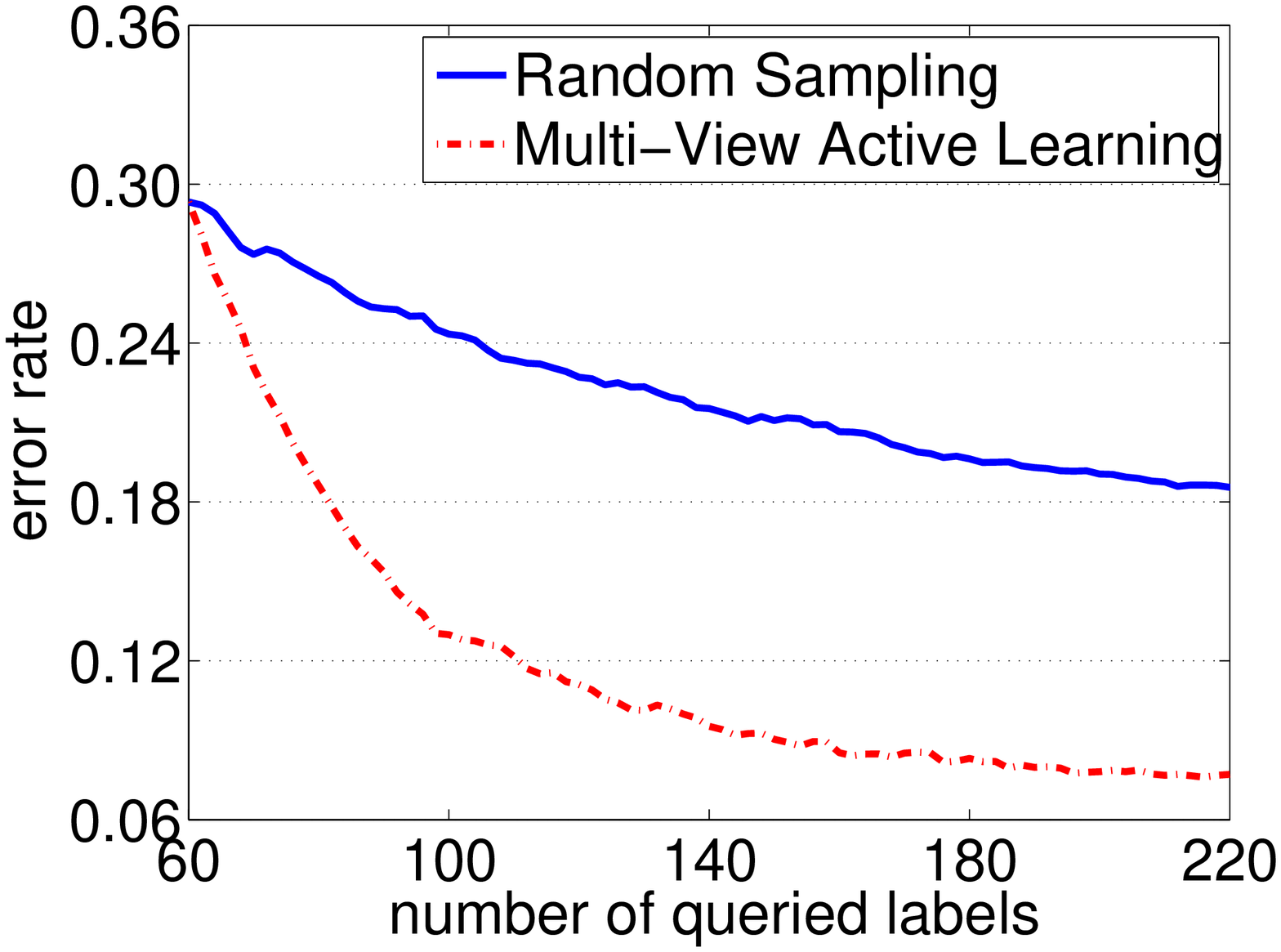}
\centering \mbox{\footnotesize (b) \textit{semi-artificial with 2 clusters}}
\end{minipage}\\
\centering
\begin{minipage}[c]{0.4\linewidth}
\centering
\includegraphics[width =1 \linewidth]{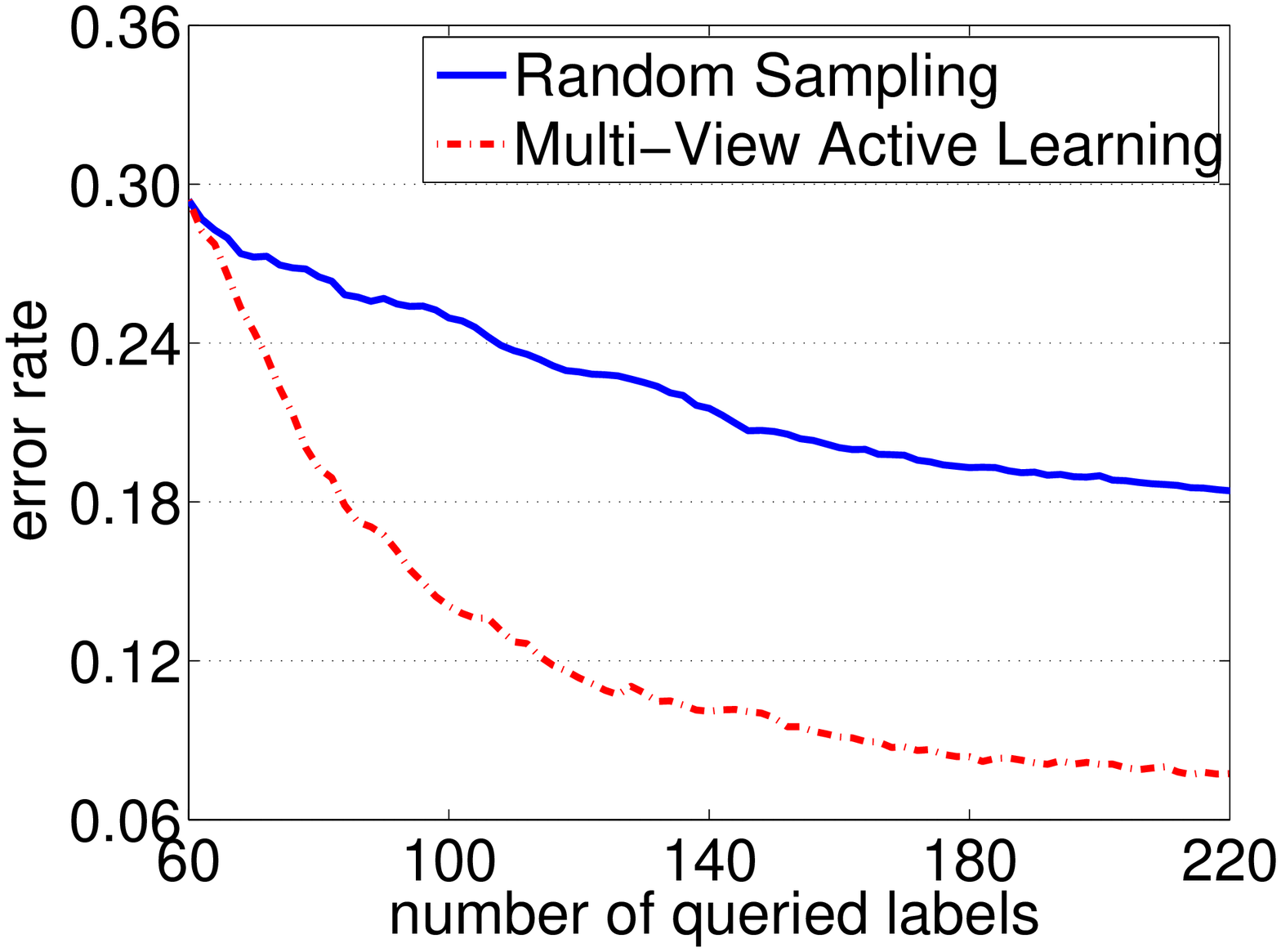}
\centering \mbox{\footnotesize (c) \textit{semi-artificial with 4 clusters}}
\end{minipage}
\centering
\begin{minipage}[c]{0.4\linewidth}
\centering
\includegraphics[width =1 \linewidth]{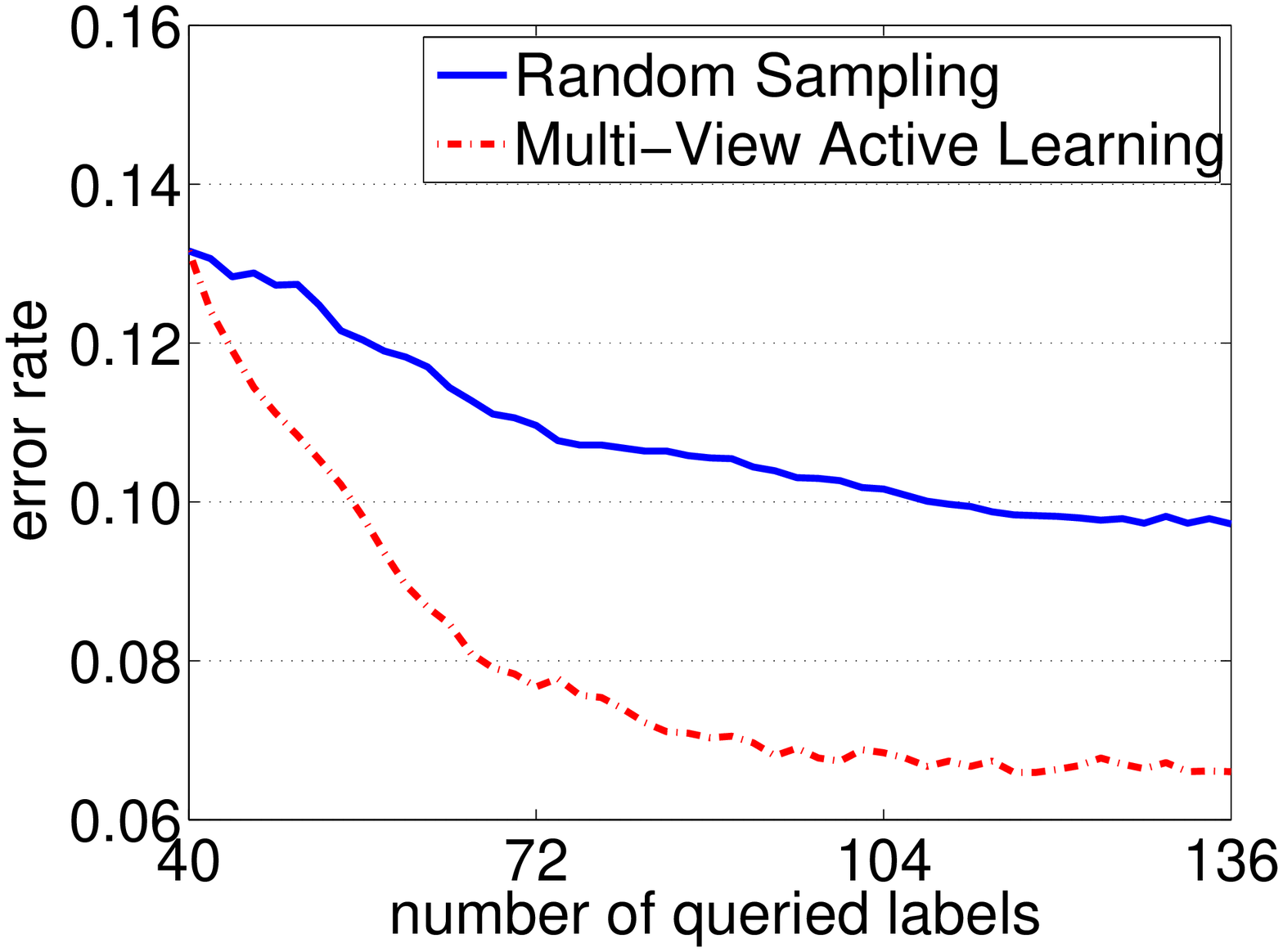}
\centering \mbox{\footnotesize (d) \textit{course}}
\end{minipage}\\
\caption{Multi-view setting improves the sample complexity of active learning in the
non-realizable case remarkably.}\label{Figure1}
\end{figure}

In the experiment we use the \textit{semi-artificial} data set \cite{MusleaMK02} and
the \textit{course} data set \cite{Blum:Mitchell1998}. The \textit{semi-artificial}
data set has two artificial views which are created by randomly pairing two examples
from the same class and contains 800 examples. In order to control the correlation
between the two views, the number of clusters per class can be set as a parameter. We
use 1 cluster, 2 clusters and 4 clusters in the experiments, respectively. The
\textit{course} data set has two natural views: \textit{pages} view (i.e., the text
appearing on the page) and \textit{links} view (i.e., the anchor text attached to
hyper-links pointing to the page) and contains 1,051 examples. We randomly use 25\%
data as the test set and use the remaining 75\% data to generate the unlabeled data set
$\mathcal {U}$. We use Random Sampling as the baseline. In each round, we fix the
number of examples to be queried in Multi-View Active Learning and that in Random
Sampling. Thus, we can study their performances under the same number of queried
examples. In the experiments, we query two examples in each round of the two methods
and implement the classifiers with NaiveBayes in WEKA. The experiments are repeated for
20 runs and Figure \ref{Figure1} plots the average error rates of the two methods
against the number of examples that have been queried. From Figure \ref{Figure1} it can
be found that the performance of Multi-View Active Learning is far better than the
performance of Random Sampling with the same number of queried examples. In other
words, multi-view setting can help improve the sample complexity of active learning in
the non-realizable case remarkably.

\section{Conclusion}
We present the first study on active learning in the non-realizable case under
multi-view setting in this paper. We prove that the sample complexity of multi-view
active learning with unbounded Tsybakov noise can be improved to $\widetilde{O}(\log
\frac{1}{\epsilon})$, contrasting to single-view setting where only \textit{polynomial}
improvement is proved possible with the same noise condition. In general multi-view
setting, we prove that the sample complexity of active learning with unbounded Tsybakov
noise is $\widetilde{O}(\frac{1}{\epsilon})$, where the order of $1/\epsilon$ is
independent of the parameter in Tsybakov noise, contrasting to previous
\textit{polynomial} bounds where the order of $1/\epsilon$ is related to the parameter
in Tsybakov noise. Generally, the non-realizability of learning task can be caused by
many kinds of noise, e.g., misclassification noise and malicious noise. It would be
interesting to extend our work to more general noise model.

\bibliography{NonrealizableMultiview}
\bibliographystyle{plain}

\end{document}